\documentclass[twoside]{article}

%
\usepackage[accepted]{aistats2026}
%
\usepackage[round]{natbib}

\usepackage{hyperref}
\usepackage[capitalize,noabbrev]{cleveref}
\usepackage{mathtools} 
\usepackage{amsmath}
\usepackage{amsfonts}
\usepackage{amssymb}
\usepackage{subcaption}
\usepackage{graphicx}
\usepackage{multirow}
\usepackage{booktabs}
\usepackage{float}
\usepackage{booktabs} 
\usepackage{tikz} 
\usepackage{subcaption} 
\usepackage{booktabs} 
\usepackage{colortbl} 
\usepackage{makecell} 
\usepackage{stfloats}   
\usepackage{tabularx}   

\newcommand{\paren}[1]{\mathopen{}\left( {#1}_{{}_{}}\,\negthickspace\right)\mathclose{}}
\newcommand{\bracket}[1]{\mathopen{}\left[ {#1}_{{}_{}}\,\negthickspace\right]\mathclose{}}
\newcommand{\fbracket}[1]{\mathopen{}\left\{ {#1}_{{}_{}}\,\negthickspace\right\}\mathclose{}}
\newcommand{\euclidean}[1]{\mathbb{R}^{#1}}

\newcommand{\Gram}{\operatorname{Gram}}
\usepackage{amsthm}
\newtheorem{theorem}{Theorem}[section]
\usepackage{thmtools, thm-restate}

\newtheorem{definition}{Definition}[section]

\newtheorem{lemma}[theorem]{Lemma}
\newtheorem{remark}{Remark}[section]

%
%




\begin{document}

%

%

\twocolumn[
\aistatstitle{Learning Equivariant Functions via Quadratic Forms}

\aistatsauthor{
Pavan~Karjol$^{1}$ \And 
Vivek~V~Kashyap$^{1}$ \And 
Rohan~Kashyap$^{2}$ \And 
Prathosh~A~P$^{1}$
}

\vspace{0.2cm} 

\aistatsaddress{
$^{1}$Department of Electrical Communication Engineering, Indian Institute of Science, Bengaluru, Karnataka\\
$^{2}$Computer Science, Carnegie Mellon University, Pittsburgh, Pennsylvania, USA
}
]

\begin{abstract}
In this study, we introduce a method for learning group (known or unknown) equivariant functions by learning the associated quadratic form $x^T A x$ corresponding to the group from the data. Certain groups, known as orthogonal groups, preserve a specific quadratic form, and we leverage this property to uncover the underlying symmetry group under the assumption that it is orthogonal. By utilizing the corresponding unique symmetric matrix and its inherent diagonal form, we incorporate suitable inductive biases into the neural network architecture, leading to models that are both simplified and efficient. Our approach results in an invariant model that preserves norms, while the equivariant model is represented as a product of a norm-invariant model and a scale-invariant model, where the “product” refers to the group action.

Moreover, we extend our framework to a more general setting where the function acts on tuples of input vectors via a diagonal (or product) group action. In this extension, the equivariant function is decomposed into an angular component extracted solely from the normalized first vector and a scale-invariant component that depends on the full Gram matrix of the tuple. This decomposition captures the inter-dependencies between multiple inputs while preserving the underlying group symmetry.

We assess the effectiveness of our framework across multiple tasks, including polynomial regression, top quark tagging, and moment of inertia matrix prediction. Comparative analysis with baseline methods demonstrates that our model consistently excels in both discovering the underlying symmetry and efficiently learning the corresponding equivariant function.
\end{abstract}

\section{Introduction}
\label{sec:intro}

Symmetry provides powerful inductive biases in deep learning, improving generalization and sample efficiency \citep{cohen2013learning, cohen2014learning, EMLPfinzi2021practical, dehmamy2021automatic}. When the underlying group is known, it can be encoded into architectures via invariant or equivariant operations, as in CNNs, which achieve translational equivariance through weight sharing. 

Two main strategies exist for incorporating symmetries: (i) data augmentation with parameterized distributions \citep{benton2020learning}, and (ii) architectures with intrinsic symmetry \citep{zaheer2017deep, kicki2020computationally, zhou2020meta}. In contrast, we develop a quadratic-form framework for learning $G$-equivariant functions, showing they canonically decompose into norm-invariant and scale-invariant terms.

Despite progress across images \citep{anselmi2019symmetry, kondor2018generalization}, graphs \citep{han2022equivariantgraphhierarchybasedneural}, and PDEs \citep{horie2023physicsembeddedneuralnetworksgraph}, challenges remain: most methods require prior group knowledge, rely on heavy computations (e.g., null-spaces, exponential maps), and scale poorly. Moreover, in many applications the group action is not directly observable, complicating symmetry exploitation.

We address these limitations by proposing a framework applicable with or without prior group knowledge. Focusing on orthogonal groups, we exploit their preservation of quadratic forms $x^\top A x$. For $O(n)$ this corresponds to Euclidean norm preservation; for Lorentz groups it corresponds to the Minkowski norm, fundamental in physics. This geometric perspective enables efficient equivariance and symmetry discovery. Section~\ref{Proposed Method} details the mathematical framework.

\subsection{Contributions}
\begin{itemize}
    \item We show that any function that is equivariant to a generalized orthogonal group (i.e., one that preserves a quadratic form $x^TAx$) can be expressed as the product of two invariant functions: a \textbf{scale-invariant} component that acts on the input normalized by the quadratic-form norm, and a \textbf{norm-invariant} component that depends only on that norm.  The product is interpreted in the context of the group’s action on the input space.
    \item We extend this decomposition to the setting with diagonal group action on the input product space. In this case, the equivariant map splits into (i) an \textbf{angular} term computed from only the normalized first input, and (ii) a \textbf{scale-invariant} term that depends on the full Gram matrix (all pairwise quadratic-form inner products) of the inputs, thereby capturing their inter-dependencies.
    \item Building on this decomposition, we introduce a \textbf{symmetry discovery} procedure: by learning the underlying quadratic form directly from data, the method uncovers the preserved symmetry without assuming it in advance.
    \item We validate the approach across diverse tasks: synthetic polynomial regression, top-quark tagging (Lorentz-invariant classification), and prediction of moment-of-inertia matrices (rotation-equivariant regression), demonstrating accurate symmetry recovery and strong predictive performance.
\end{itemize}

\section{Related Work}
\subsection{Group Equivariance}
Significant progress has been made both theoretically and practically in symmetry learning for deep learning architectures \citep{kondor2018clebsch, bekkers2021bsplinecnnsliegroups, EMLPfinzi2021practical, bronstein2021geometric, dehmamy2021automatic}. The classical example is the G-equivariant neural networks (G-CNNs) \citep{cohen2014learning} which extends CNN's to accommodate a broader class of symmetry groups beyond translations. Building on this, \citep{kondor2018generalization} developed convolution formulations for arbitrary compact group actions, while \citep{cohen2019general} further extended G-CNNs to homogeneous spaces through equivariant linear mappings.

Equivariant multi-layer perceptrons (EMLPs) for general groups were introduced by \citep{EMLPfinzi2021practical}, leveraging null space computations to achieve equivariance; however, scalability challenges persist, especially for higher-order groups and also requires explicit knowledge of the group to encode the group representations. Similarly, \citep{otto2023unified} proposed a linear-algebraic approach that enforces symmetries through linear constraints and convex penalties, though this approach also encounters scalability limitations. 

In related work, \citep{park2022learningsymmetricembeddingsequivariant} employed symmetric embedding networks to learn equivariant features for downstream tasks, although their method requires prior knowledge of the symmetry group. Contrastive learning methods were also developed by \citep{shakerinava2022learning, dangovski2022equivariantcontrastivelearning} for self-supervised equivariant representations. Symmetry learning has found practical applications in domains such as protein structure prediction, where molecular properties require invariance for 3D rotations \citep{yim2023se3diffusionmodelapplication}.

\subsection{Automatic Symmetry Discovery}

Beyond architectures with fixed symmetries, several works focus on \emph{automatic discovery} of symmetries directly from data. These approaches differ in how they parameterize transformations, enforce invariance or equivariance, and whether they yield a usable prediction model.

\subsubsection{Distribution over generators}  
Augerino~\citep{benton2020learning} learns a distribution over Lie algebra generators and samples transformations during training, applying an invariance or equivariance loss depending on the task. Similarly, LieGAN~\citep{yang2023liegan} learns such distributions but enforces consistency via an adversarial loss: a discriminator distinguishes whether transformed samples remain within the data distribution. A key difference is that Augerino yields a trained prediction model with soft equivariance, while LieGAN focuses on generator discovery and does not provide a predictor.

\subsubsection{Post-hoc generator extraction}  
LieGG~\citep{moskalev2022liegg} extracts Lie algebra generators from trained models using linear-algebraic analysis. While useful for diagnosing symmetries, the approach is inherently post-hoc and does not construct an explicit equivariant model.

\subsubsection{Vector-field approaches}  
Learning Infinitesimal Generators (LIG)~\citep{ko2024lig} models continuous flows via Neural ODEs, extending symmetry discovery beyond affine groups. However, enforcing equivariance would require synchronized flows in the output space, computationally non-trivial and generally infeasible. Symmetry Discovery Beyond Affine (SDBA)~\citep{shaw2024sdba} also identifies vector fields that annihilate functions, but is typically limited to single-generator settings and mainly yields invariants rather than full equivariant predictors. Both methods therefore provide only indirect or implicit symmetry information.

\subsubsection{Discrete-group inference}  
Bispectral Neural Networks (BNN)~\citep{sanborn2022bnn} address discrete groups by reconstructing Cayley tables from bispectrum features, thus inferring the group explicitly. While highly interpretable for finite groups, no analogous bispectral construction exists for continuous symmetries.

\subsubsection{Our distinction}  
In contrast, our framework simultaneously \emph{discovers} the underlying symmetry and provides an explicitly \emph{interpretable prediction model}. By learning the quadratic form preserved by an orthogonal group, we obtain a canonical decomposition into norm- and scale-invariant components. Unlike distribution-based methods (Augerino, LieGAN), which enforce equivariance only implicitly, and unlike post-hoc or indirect approaches (LieGG, LIG, SDBA), our method integrates discovery and prediction in a unified and interpretable manner, combining the clarity of BNN with applicability to continuous groups.

\section{Preliminaries}

\begin{definition}[\textbf{Group}]
A group \( (G,\cdot) \) is a set \(G\) with a binary operation \(\cdot:G\times G\to G\) satisfying:  
(i) Closure: \(a\cdot b\in G\) for all \(a,b\in G\);  
(ii) Associativity: \((a\cdot b)\cdot c=a\cdot(b\cdot c)\);  
(iii) Identity: \(\exists e\in G\) s.t.\ \(e\cdot a=a\cdot e=a\) for all \(a\in G\);  
(iv) Inverse: \(\forall a\in G, \ \exists a^{-1}\in G\) with \(a\cdot a^{-1}=a^{-1}\cdot a=e\).
\end{definition}

\begin{definition}[\textbf{Group Action}]
A (left) \emph{group action} of \(G\) on a set \(X\) is a map \(G\times X\to X,\ (g,x)\mapsto g\cdot x\), such that  
(i) \(e\cdot x=x\) for all \(x\in X\);  
(ii) \((gh)\cdot x=g\cdot(h\cdot x)\) for all \(g,h\in G,\ x\in X\).
\end{definition}

\begin{definition}[\textbf{Orbit}]
Given a group action of \(G\) on \(X\), the \emph{orbit} of \(x\in X\) is
\begin{equation}
G\cdot x=\{\,g\cdot x \mid g\in G\,\}.
\label{eq:def_orbit}
\end{equation}
\end{definition}

\begin{definition}[\textbf{General Linear Group}]
The \emph{general linear group} \(\mathrm{GL}(n,\mathbb{R})\) is the group of all invertible \(n\times n\) real matrices under matrix multiplication.
\end{definition}

\begin{definition}[\textbf{Equivariant Function}]
Let \(G\) act on sets \(X\) and \(Y\). A function \(f:X\to Y\) is \emph{equivariant} if
\[
f(g\cdot x)=g\cdot f(x), \quad \forall g\in G,\ x\in X.
\]
Equivariance means \(f\) commutes with the group action.  
An \emph{invariant function} is the special case where \(g\cdot f(x)=f(x)\) for all \(g\in G\).
\end{definition}

\begin{figure}[h]
    \centering
    \includegraphics[width=\columnwidth]{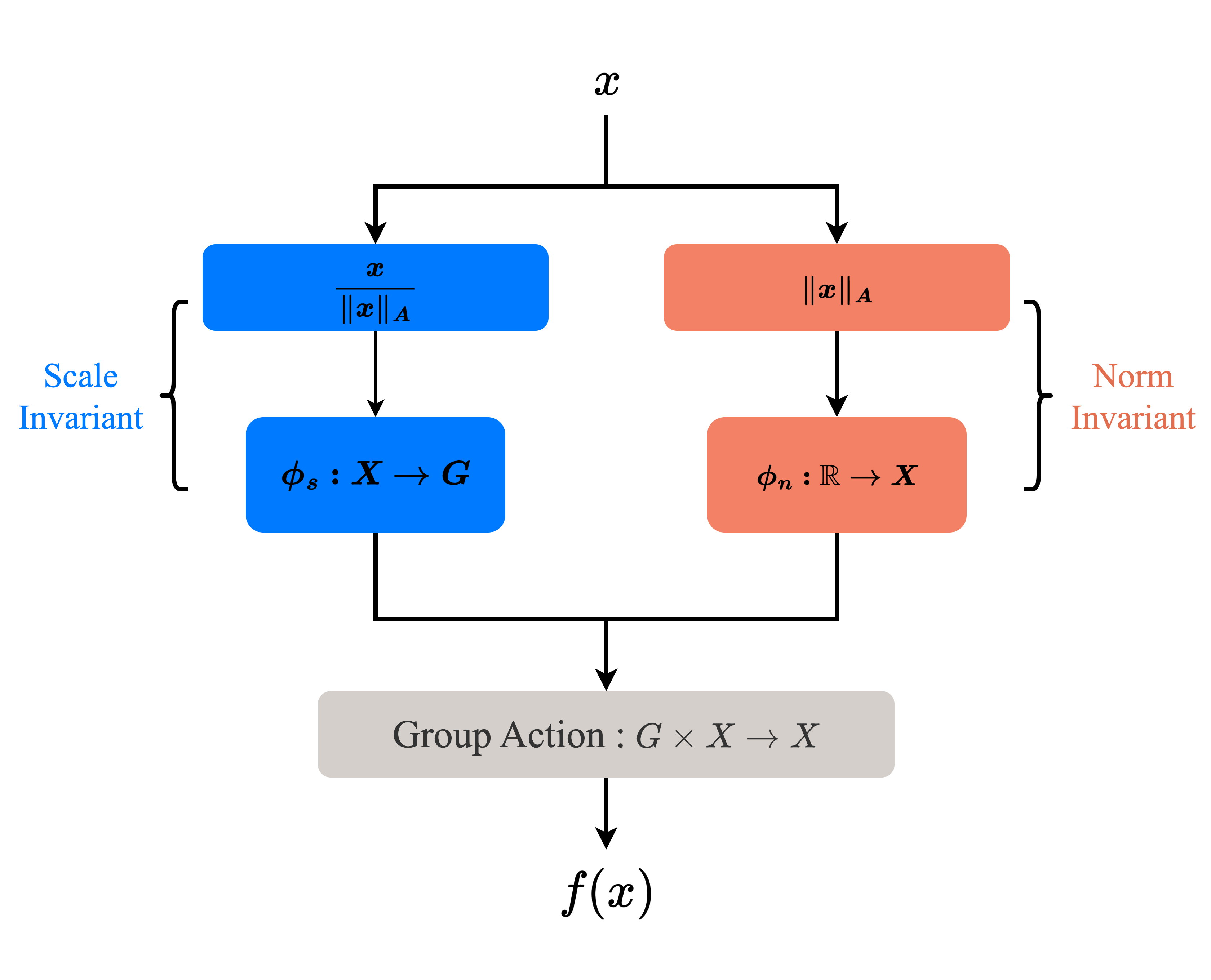} %
    \caption{Schematic representation of the proposed method. \( \phi_n \) and \( \phi_s \) denote neural networks, where the norm-invariant network takes input norms, and the scale-invariant network takes normalized inputs where \( \Vert x \Vert_A = \operatorname{sign}(x^T A x) \sqrt{\vert x^T A x \vert} \).}
    \label{fig:block_diagram_1}
\end{figure}

\section{Proposed Method}
\label{Proposed Method}
We present a general framework for learning group-equivariant functions by embedding appropriate quadratic forms within our network architecture. Specifically, we focus on groups that preserve quadratic forms of the type ``\( x^T A x \)'', known as orthogonal groups. Our framework supports two cases: when the underlying group is known and when it is unknown. In the first case, we fix the quadratic form within the architecture. In the second case, we allow the quadratic form to be learned. We begin with a brief overview of quadratic forms before presenting our main contributions.

\subsection{Quadratic Forms}
A quadratic form in \(x\in\mathbb{R}^n\), with \(A\in\mathbb{R}^{n\times n}\), is
\begin{equation}
    q(x;A) := x^T A x .
    \label{eq:quadratic form}
\end{equation}
Every quadratic form is uniquely determined by a symmetric matrix, since
\[
q(x;A) = x^T A x = \tfrac{1}{2}\,x^T(A+A^T)x, \quad \forall x\in\mathbb{R}^n,
\]
so without loss of generality we assume \(A\) symmetric.

A symmetric matrix \(A\) can be diagonalized by an orthogonal matrix \(U\), i.e.\
\begin{equation}
    A = U^T D U,
    \label{eq:diagonalization}
\end{equation}
where \(D\) is diagonal with the eigenvalues of \(A\). Substituting into \eqref{eq:quadratic form} gives
\begin{equation}
    q(x;A) = x^T U^T D U x = (Ux)^T D (Ux) = y^T D y,
    \label{eq:transformDiag}
\end{equation}
with \(y=Ux\). Hence, learning \(q(x;A)\) reduces to learning an orthogonal matrix \(U\) and a diagonal matrix \(D\).



\subsection{Orthogonal Group: $O(p,q)$}
Consider the set defined as,
\begin{equation}
    O(p,q) := \{\, U \in GL(n,\mathbb{R}) : U^T A U = A \,\},
    \label{eq:Orthogonal group}
\end{equation}
which is a Lie subgroup of \(GL(n,\mathbb{R})\) that preserves the quadratic form \(q(x;A)=x^T A x\), i.e.\ \(q(Rx;A)=q(x;A)\) for all \(R\in O(p,q)\).  
Examples include the standard orthogonal group preserving the Euclidean norm \(x^T x\), and the Lorentz group preserving the Minkowski norm \(x^T \eta x\) with \(\eta=\mathrm{diag}(1,-1,-1,-1)\).

In this work, we focus on learning a function \( f : X \rightarrow \mathbb{R}^m \), where \( f \) is equivariant under an orthogonal group \( G \) that preserves a given or unknown quadratic form \( q(x; A) \). In other words, \( A \) may be known or unknown.

We now state our first result in the following theorem, which characterizes the set of orbits under the action of an orthogonal group \( G \). Specifically, each \( G \)-orbit corresponds to a unique value of the quadratic form.

\begin{restatable}{theorem}{orbitSpace}
\label{thm:orbit-space}
Let \( A \) be a non-zero \( n \times n \) matrix, and define \( U := \fbracket{ x \in \mathbb{R}^n : x^T A x = 0 } \). Let \( G \) be an orthogonal group that preserves the quadratic form \( x^T A x \), and consider the action of \( G \) on \( \mathbb{R}^n \setminus U \). For a given \( c \in \euclidean{} \setminus \{0\} \), define the set \( \mathcal{O}_c := \fbracket{ x \in \euclidean{n} : x^T A x = c } \). Then, the set of orbits under the action of \( G \) is given by
\begin{equation}
    \mathcal{O}\paren{G} = \fbracket{\mathcal{O}_c: c \in \euclidean{n} \setminus \{0\} \And \mathcal{O}_c \neq \varnothing}.
    \label{eq:orbit-space}
\end{equation}
\end{restatable}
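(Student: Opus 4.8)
The plan is to show the two containments: that every $\mathcal{O}_c$ (with $c\neq 0$ and $\mathcal{O}_c\neq\varnothing$) is a single $G$-orbit, and conversely that every $G$-orbit equals some such $\mathcal{O}_c$. The easy inclusion is that each orbit lies inside some $\mathcal{O}_c$: if $x\in\mathbb{R}^n\setminus U$ then $c:=x^TAx\neq 0$, and for any $R\in G$ we have $(Rx)^TA(Rx)=x^T(R^TAR)x=x^TAx=c$ by the defining relation $R^TAR=A$, so $G\cdot x\subseteq\mathcal{O}_c$. This also shows the sets $\mathcal{O}_c$ partition $\mathbb{R}^n\setminus U$, since the map $x\mapsto x^TAx$ is constant on orbits and takes every nonzero value appearing in its range. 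The substantive direction is transitivity: given $x,y$ with $x^TAx=y^TAy=c\neq 0$, produce $R\in G$ with $Rx=y$.

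For the transitivity step I would first reduce to the diagonal case using the spectral/Sylvester normal form. Since $A$ is symmetric (WLOG, as the paper notes, and one should state this hypothesis — the theorem says "non-zero $n\times n$ matrix" but the quadratic-form machinery needs symmetry), write $A=U_0^TDU_0$ with $U_0$ orthogonal and, after a further diagonal congruence, assume $A=\mathrm{diag}(I_p,-I_q,0_r)$; conjugating the group $G$ by the corresponding change of basis reduces the claim to the standard group $O(p,q)$ acting on $\mathbb{R}^{p+q}$ (the zero block can be handled by noting $U$ contains the degenerate subspace, or by assuming $A$ nondegenerate on the relevant part). Then the result is the classical fact that $O(p,q)$ acts transitively on each nonzero level set of the form $\sum x_i^2-\sum x_j^2=c$: this is the standard Witt-extension / Gram-Schmidt argument — extend the unit vector $x/\sqrt{|c|}$ to a "pseudo-orthonormal" basis adapted to the form, do the same for $y/\sqrt{|c|}$, and the linear map sending one basis to the other preserves $A$ and sends $x$ to $y$. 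I would cite Witt's theorem (or give the two- to three-line Gram–Schmidt construction) rather than grind through it.

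The main obstacle is the degenerate case: when $A$ has a nontrivial kernel, "$O(p,q)$" with $p+q<n$ is not the full stabilizer of $A$ in $GL(n,\mathbb{R})$, and one must be careful about what "the orthogonal group preserving $x^TAx$" means and whether it still acts transitively on $\mathcal{O}_c\setminus U$ — generically it does, because one can translate freely along the radical, but this needs a sentence. A secondary subtlety is ensuring $\mathcal{O}_c\cap U=\varnothing$, i.e. that the condition $x^TAx=c\neq0$ already forces $x\notin U$ — which is immediate from the definition of $U$ — so the restriction of the action to $\mathbb{R}^n\setminus U$ is consistent with the stated orbit decomposition. Finally I would remark that the parametrization in \eqref{eq:orbit-space} is exactly "orbit $\leftrightarrow$ attained nonzero value of $q$," completing the identification $\mathcal{O}(G)$ with $\{c\in\mathbb{R}\setminus\{0\}:\mathcal{O}_c\neq\varnothing\}$.
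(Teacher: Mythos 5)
Your proposal is correct and follows the same two-containment skeleton as the paper, but it substitutes a different tool for the substantive transitivity step. Where you propose reducing $A$ to Sylvester normal form $\mathrm{diag}(I_p,-I_q,0_r)$ and then invoking Witt's extension theorem (or a short pseudo-orthonormal Gram--Schmidt argument), the paper instead builds the aligning isometry explicitly through three auxiliary lemmas: one for the positive-definite block (Lemma~\ref{lemma:canonical}, conjugating $O(n)$ by $S=\sqrt{D}$), one for the semi-definite block mixing a nonzero eigenvalue with the radical (Lemma~\ref{lemma:psd}, a shear-type construction), and one for the indefinite pair of signs (Lemma~\ref{lemma:pos-neg}, a hyperbolic rotation). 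The paper then composes these block-wise to map any $x$ to a canonical representative $\gamma e_1$ or $\gamma e_{k+1}$. Your Witt-theorem route is shorter and conceptually cleaner when $A$ is non-degenerate, and it makes the role of the classical theory transparent; the paper's route is longer but fully elementary and, importantly, handles the degenerate radical constructively rather than by appeal to a theorem whose standard statement assumes non-degeneracy.

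Two of your side remarks align with real issues. First, you are right that the theorem's hypothesis ``non-zero $n\times n$ matrix'' tacitly requires $A$ symmetric; the paper makes this WLOG reduction in the text preceding the theorem and again at the start of the proof, but it is not in the statement itself. Second, your caution about the degenerate case is well placed: when $\ker A\neq 0$, the stabilizer $G=\{Q:Q^\top AQ=A\}$ is strictly larger than a conjugate of $O(p,q)$ (it contains shears into the radical), and Witt's theorem in its usual form does not directly give transitivity on level sets. The paper's Lemma~\ref{lemma:psd} and Steps~2--3 of the main proof are precisely the ``sentence'' you gesture at, but they take roughly a page of explicit matrix computation. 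If you were to flesh out your sketch, you would need either a degenerate version of Witt extension or an argument that the radical shears let you project onto the non-degenerate part, after which standard Witt applies; as written, the proposal correctly identifies the gap but does not close it.
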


Using the above result, we show that any \( G \)-equivariant function can be expressed in a canonical form that combines a norm-invariant (equivalently, \( G \)-invariant) function and a scale-invariant function. This formulation is formalized in the following theorem.

\begin{restatable}{theorem}{mainResult}
\label{thm:main-result}
Let \( A \) be a non-zero \( n \times n \) matrix, and define \( U := \fbracket{x \in \euclidean{n}: x^TAx = 0} \). Any \( G \)-equivariant function \( f: \euclidean{n} \setminus U \rightarrow \euclidean{m} \), where \( G \) is an orthogonal group that preserves the quadratic form \( x^T A x \), can be expressed as,
\begin{equation}
    f(x) = \phi_s\paren{\frac{x}{\Vert x \Vert_A}} \cdot \phi_n\paren{\Vert x \Vert_A},
    \label{eq:main-result}
\end{equation}
for some functions $\phi_s: \euclidean{n} \rightarrow G$ and $\phi_n: \euclidean{} \rightarrow \euclidean{m}$. Here, \( \Vert z \Vert_A \) denotes the norm associated with the quadratic form \( A \), defined by \( \Vert z \Vert_A = \operatorname{sign}(z^T A z) \sqrt{\vert z^T A z \vert} \).
\end{restatable}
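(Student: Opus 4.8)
The plan is to prove the statement orbit by orbit, using Theorem~\ref{thm:orbit-space} both to pin down the orbit structure and to supply transitivity. Write \( \hat x := x/\Vert x\Vert_A \) and \( \varepsilon(x) := \operatorname{sign}(x^T A x) \). Three elementary identities drive the argument: (a) since \( g^T A g = A \) for \( g\in G \), we have \( \Vert g x\Vert_A = \Vert x\Vert_A \), so \( x\mapsto \Vert x\Vert_A \) is \( G \)-invariant; (b) for \( \lambda\neq 0 \), \( \Vert\lambda x\Vert_A = |\lambda|\,\Vert x\Vert_A \), and since \( G\subseteq GL(n,\mathbb{R}) \) acts linearly on \( \mathbb{R}^n \), \( \widehat{gx} = g\hat x \), so \( x\mapsto\hat x \) is \( G \)-equivariant; (c) \( \hat x^T A\hat x = (x^T A x)/\Vert x\Vert_A^2 = \varepsilon(x) = \operatorname{sign}(\Vert x\Vert_A) \), so \( \hat x \) always lands in one of the two ``unit'' level sets \( \mathcal O_{+1} \), \( \mathcal O_{-1} \), on each of which \( G \) acts transitively by Theorem~\ref{thm:orbit-space}. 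Thus \( \hat x \) carries the scale-invariant content and \( \Vert x\Vert_A \) the norm content, matching the two factors in~\eqref{eq:main-result}.

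Next I would construct \( \phi_s \) and \( \phi_n \) explicitly. For each sign \( \varepsilon\in\{+1,-1\} \) with \( \mathcal O_\varepsilon\neq\varnothing \) (at least one holds since \( A\neq 0 \); if only one does, only that base point is ever used), fix a base point \( b_\varepsilon\in\mathcal O_\varepsilon \). Define \( \phi_n:\mathbb{R}\to\mathbb{R}^m \) by \( \phi_n(\tau) := f(\tau\,b_{\operatorname{sign}(\tau)}) \) for \( \tau\neq 0 \) — legitimate since \( (\tau b_{\operatorname{sign}\tau})^T A(\tau b_{\operatorname{sign}\tau}) = \operatorname{sign}(\tau)\,\tau^2\neq 0 \), so the argument lies in \( \mathbb{R}^n\setminus U \) — and, say, \( \phi_n(0):=0 \). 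For \( \phi_s:\mathbb{R}^n\to G \): by transitivity, for every \( z\in\mathcal O_{+1}\cup\mathcal O_{-1} \) there is \( g\in G \) with \( g\,b_{\varepsilon(z)} = z \); pick one such \( g \) and set \( \phi_s(z):=g \), and set \( \phi_s \) to the identity off \( \mathcal O_{+1}\cup\mathcal O_{-1} \) (that region is never evaluated, since \( \hat x \) always lies inside it).

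The verification is then a one-liner. Given \( x\in\mathbb{R}^n\setminus U \), put \( \tau:=\Vert x\Vert_A \), \( \varepsilon:=\operatorname{sign}(\tau) \), \( \hat x := x/\tau \), and \( g:=\phi_s(\hat x)\in G \). By construction \( g\,b_\varepsilon = \hat x \), hence \( g(\tau b_\varepsilon) = \tau\,g\,b_\varepsilon = \tau\hat x = x \) by linearity. Since \( \tau b_\varepsilon\in\mathbb{R}^n\setminus U \) and \( f \) is \( G \)-equivariant, \( f(x) = f\paren{g\cdot(\tau b_\varepsilon)} = g\cdot f(\tau b_\varepsilon) = \phi_s(\hat x)\cdot\phi_n(\tau) = \phi_s\paren{x/\Vert x\Vert_A}\cdot\phi_n\paren{\Vert x\Vert_A} \), which is~\eqref{eq:main-result}.

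The main obstacle is bookkeeping rather than depth: the signed norm \( \Vert x\Vert_A = \operatorname{sign}(x^T A x)\sqrt{|x^T A x|} \) forces one to track the sign of \( x^T A x \) consistently, to select the matching base point \( b_\varepsilon \), and to keep the scale \( \tau \) on the correct half-line — otherwise identities (a)--(c) and the step \( g(\tau b_\varepsilon)=x \) break. A secondary point is that choosing \( \phi_s \) uses the axiom of choice (one group element per orbit point), so \( \phi_s \) and \( \phi_n \) are merely set-theoretic functions with no claimed regularity; a continuous \( \phi_s \) would additionally require a global continuous section of the orbit map \( G\to\mathcal O_{\pm1} \), which the statement does not demand. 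Finally, the decomposition is non-unique (it depends on \( b_\varepsilon \) and on the section); with an explicit choice of \( g \) no well-definedness check is needed, though one would note it follows from \( \operatorname{Stab}(b_\varepsilon)\subseteq\operatorname{Stab}(f(\tau b_\varepsilon)) \), itself immediate from equivariance.
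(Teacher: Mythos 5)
Your proof is correct and follows essentially the same strategy as the paper's: choose an orbit representative that scales homogeneously with $\Vert x\Vert_A$, invoke Theorem~\ref{thm:orbit-space} (orbits are norm level sets, hence $G$ acts transitively on each) to get a group element aligning $x$ with its representative, and then factor $f(x)$ by equivariance into that group element acting on $f$ evaluated at the representative. Your write-up is more explicit than the paper's — naming the base points $b_{\pm 1}$, defining $\phi_n$ and $\phi_s$ as concrete set-theoretic functions, and doing the one-line verification, whereas the paper posits $r(y)$ and $\phi_1$ more abstractly — but the underlying decomposition and the role played by Theorem~\ref{thm:orbit-space} are the same.
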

\begin{remark}
    We would like to emphasize that $\Vert z \Vert_A$ is a pseudonorm, as it can take on negative values.
\end{remark}
The scale invariance part of the function decomposition of \eqref{eq:main-result} is illustrated in Figure~\ref{fig:scale-invariance-illustration}. 
\begin{remark}
The scale invariance function \(\phi_s\) in \eqref{eq:main-result} is required to be \(G\)-equivariant. However, because its input is normalized with respect to the quadratic form defined by \(A\), \(\phi_s\) effectively operates on a single orbit, unlike the original function \(f\) in \eqref{eq:main-result} which acts on multiple orbits. Consequently, one could incorporate an additional inductive bias toward \(G\)-equivariance for \(\phi_s\) (for example, by introducing a regularizer) beyond the inherent scale invariance. In our experiments, however, we observed that the built-in scale invariance suffices.
\end{remark}

\begin{figure}[!ht]
\centering
\resizebox{0.75\columnwidth}{!}{%
\begin{tikzpicture}[scale=2.0, font=\footnotesize]
\def\ang{70}
\fill[teal, opacity=0.5] (0,0) circle (2);
\fill[white] (0,0) circle (1.5);
\draw[thin, color=teal] (0,0) circle (2);

\fill[orange, opacity=0.5] (0,0) circle (1.5);
\fill[white] (0,0) circle (1);
\draw[thin, color=orange] (0,0) circle (1.5);

\fill[blue, opacity=0.5] (0,0) circle (1);
\draw[thin, color=blue] (0,0) circle (1);

\draw[->, very thick, color=black] (0,0) -- (2.3,0) node[right, black] {$e_1$};
\draw[->, very thick, color=black] (0,0) -- (\ang:2.3) node[right, red!80!black] {$R(\theta)\,e_1$};
\coordinate (A1) at (1,0);
\coordinate (A2) at (1.5,0);
\coordinate (A3) at (2,0);
\coordinate (B1) at (\ang:1);
\coordinate (B2) at (\ang:1.5);
\coordinate (B3) at (\ang:2);
\fill[blue!50!blue] (A1) circle (0.03) node[below right, black] {$r_1$};
\fill[orange!50!orange] (A2) circle (0.03) node[below right, black] {$r_2$};
\fill[teal!50!teal] (A3) circle (0.03) node[below right, white] {$r_3$};
\fill[blue!50!blue] (B1) circle (0.03) node[above right = 0.5pt, black] {$x$};
\fill[orange!50!orange] (B2) circle (0.03) node[above right = 0.5pt, black] {$y$};
\fill[teal!50!teal] (B3) circle (0.03) node[above right = 0.5pt, white] {$z$};
\node at (0.75,0.15) [black] {$\phi_s = I$};
\node at (\ang:1.15) [black, above left] {$\phi_s = R(\theta)$};
\end{tikzpicture}
}
\caption{For an $O(2)$-equivariant function $f$, we have $f(x) = f\paren{R(\theta)r_1 e_1} = \protect\underbrace{R(\theta)}{}f\paren{r_1 e_1}$. Similarly, $f(y) = \protect\underbrace{R(\theta)}{}f\paren{r_2 e_1}; \: f(z) = \protect\underbrace{R(\theta)}{}f\paren{r_3 e_1}.$ Hence, $\phi_s(x) = \phi_s(y) = \phi_s(z) = R(\theta)$. This property holds for general orthogonal groups as well.}
\label{fig:scale-invariance-illustration}
\end{figure}
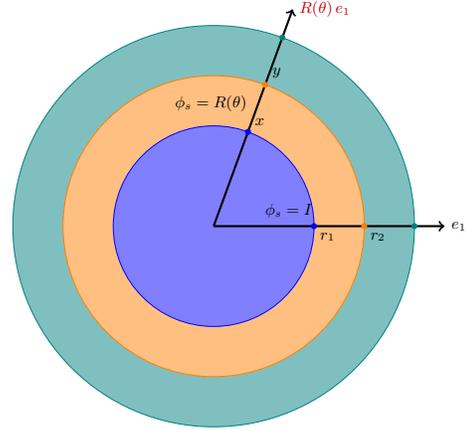

\begin{remark}
In Theorem \ref{thm:main-result}, the set \( U \) has measure zero in \( \euclidean{n} \). Therefore, including this set \( U \) in our experiments has a negligible impact on the results, as it represents an insignificant portion of the space.
\end{remark}

Next, we analyze the continuity and differentiability of the \( G \)-equivariant function as follows:

\begin{restatable}{proposition}{regularity}
\label{prop:regularity}
Consider Theorem \ref{thm:main-result}. Suppose \( A \) is either a positive or a negative definite matrix. Then, \( \phi_n \) and \( \phi_s \) are smooth (\( C^\infty \)) whenever \( f \) is smooth (\( C^\infty \)). If \( A \) is neither a positive nor a negative definite matrix, then \( \phi_n \) and \( \phi_s \) are continuous (\( C^0 \)) whenever \( f \) is continuous (\( C^0 \)).
\end{restatable}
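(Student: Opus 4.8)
The plan is to unwind the construction of $\phi_n$ and $\phi_s$ used to prove Theorem~\ref{thm:main-result} and show that, on the set where they are pinned down by $f$, each is $f$ post-composed with elementary maps whose regularity is transparent. Fix base points $b_{\pm}$ with $b_{\pm}^{\top}Ab_{\pm}=\pm 1$ on each non-empty orbit $\mathcal{O}_{\pm 1}$. Then one may take $\phi_n(r)=f(x_r)$ with $x_r:=|r|\,b_{\operatorname{sign}(r)}$ a representative of the orbit $\{\,z:\Vert z\Vert_A=r\,\}$, and $\phi_s(\hat z):=\sigma_{\pm}(\hat z)$ where $\sigma_{\pm}$ is a section of the orbit map $\mu_{\pm}:G\to\mathcal{O}_{\pm1}$, $g\mapsto g\cdot b_{\pm}$; equivariance of $f$ (exactly as in Theorem~\ref{thm:main-result}) guarantees that $f(x)=\phi_s(x/\Vert x\Vert_A)\cdot\phi_n(\Vert x\Vert_A)$ holds for these choices. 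Hence $\phi_n=f\circ(r\mapsto x_r)$ and $\phi_s=\sigma_{\pm}\circ(x\mapsto x/\Vert x\Vert_A)$, and it suffices to track the regularity of (i) $r\mapsto x_r$, (ii) the normalization $x\mapsto x/\Vert x\Vert_A$, and (iii) the section $\sigma_{\pm}$.

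For (i), $r\mapsto x_r$ is linear on each half-line $(0,\infty)$ and $(-\infty,0)$, hence $C^\infty$ there, so $\phi_n$ inherits the regularity of $f$. For (ii), note $\Vert x\Vert_A=\sqrt{x^{\top}Ax}$ on $\{x^{\top}Ax>0\}$ and $\Vert x\Vert_A=-\sqrt{-x^{\top}Ax}$ on $\{x^{\top}Ax<0\}$. When $A$ is positive (resp.\ negative) definite, the first (resp.\ second) set is all of $\mathbb{R}^n\setminus\{0\}$ and the radicand is a strictly positive smooth function there (comparable to $\Vert x\Vert^2$), so $\Vert\cdot\Vert_A$ and $x\mapsto x/\Vert x\Vert_A$ are $C^\infty$ on $\mathbb{R}^n\setminus\{0\}$. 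When $A$ is indefinite the same formulas are $C^\infty$ on each half-cone, but along any line meeting the zero cone $U$ one has $\Vert x\Vert_A\sim\operatorname{sign}(t)\sqrt{c\,|t|}$ while $x/\Vert x\Vert_A$ is unbounded as $x\to U$; the pseudonorm is therefore only $C^0$ near $U$, and this $\sqrt{|\,\cdot\,|}$/sign behaviour is exactly what drives the conclusion down to continuity. For (iii), $\mu_{\pm}$ is a smooth surjective submersion — surjectivity is Theorem~\ref{thm:orbit-space}, and the rank is constant by homogeneity, $\mu_{\pm}(gh)=g\cdot\mu_{\pm}(h)$ — so by the submersion theorem it admits a smooth section in a neighbourhood of every point; composing with the normalization then shows $\phi_s$ is $C^\infty$ in the definite case and $C^0$ in the indefinite case.

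The step I expect to be the real obstacle is (iii): the orbit map need not admit a \emph{global} smooth section, since the principal bundle $G\to G/\operatorname{Stab}(b_{\pm})$ can be non-trivial (e.g.\ $O(3)\to S^2$ admits no continuous global section), so literal global smoothness of $\phi_s$ would be too strong; the cleanest true statement is local smoothness of $\phi_s$ (equivalently, smoothness on a dense open subset of each orbit), which is what the downstream constructions use. I would therefore phrase the argument in terms of smooth \emph{local} sections, and, for the indefinite case, work separately on the two half-cones $\{x^{\top}Ax>0\}$ and $\{x^{\top}Ax<0\}$, where $\operatorname{sign}(x^{\top}Ax)$ is constant, then invoke the $\sqrt{|\,\cdot\,|}$ behaviour of $\Vert\cdot\Vert_A$ to conclude that continuity, but not differentiability, survives. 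A minor bookkeeping point — that $\Vert x_r\Vert_A=r$ with the chosen $x_r$, and that the sign $\operatorname{sign}(r)$ appearing in $x_r$ is compatible with $-I\in G$ so that the decomposition is genuinely satisfied — is routine and already implicit in the proof of Theorem~\ref{thm:main-result}.
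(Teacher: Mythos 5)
Your proof takes a genuinely different, more constructive route than the paper's, and in the process surfaces a subtlety the paper's argument silently passes over. The paper's proof rearranges the decomposition as $\phi_n(\|x\|_A) = \phi_s\!\left(x/\|x\|_A\right)^{-1} f(x)$, establishes that the normalization $x \mapsto x/\|x\|_A$ is $C^\infty$ in the definite case (since $|x^\top A x|$ is smooth and strictly positive on $\mathbb{R}^n \setminus \{0\}$) and only $C^0$ in the indefinite case (because $\sqrt{|\cdot|}$ is not differentiable at $0$ and the null cone $U$ meets the domain non-trivially), and then reads off the regularity of $\phi_n$ from that of $f$. But the paper's step asserting that $x \mapsto \phi_s\!\left(x/\|x\|_A\right)^{-1}$ is smooth tacitly assumes $\phi_s$ is already smooth, which is one of the two conclusions the proposition is meant to establish. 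Your construction instead builds $\phi_s$ explicitly as a section $\sigma_\pm$ of the orbit map $\mu_\pm : G \to \mathcal{O}_{\pm 1}$, $g \mapsto g \cdot b_\pm$, and builds $\phi_n$ as $f$ evaluated along the ray $r \mapsto |r|\, b_{\operatorname{sign}(r)}$; each is then a concrete composition whose regularity is tracked factor by factor. In particular, your $\phi_n = f \circ (r \mapsto x_r)$ obtains the regularity of $\phi_n$ directly from $f$, with no appeal to $\phi_s$, removing the circularity present in the paper's isolation argument.

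Your remark that $\mu_\pm$ need not admit a \emph{global} continuous section is the substantive point, and it is correct. The quotient $G \to G/\operatorname{Stab}(b_\pm)$ is in general a non-trivial principal bundle: for $G = O(3)$, $\mathcal{O}_1 \cong S^2$ and the fiber is $O(2)$, so a global section would trivialize the bundle and force $O(3) \cong S^2 \times O(2)$, which is impossible since $\pi_1(O(3)) \cong \mathbb{Z}/2$ while $\pi_1(S^2 \times O(2)) \cong \mathbb{Z}$. As stated, the proposition's claim of global smoothness of $\phi_s$ is therefore too strong; the defensible statement is local smoothness, i.e.\ smoothness on a dense open subset of each orbit obtained from local sections of the submersion $\mu_\pm$. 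Your proposal to phrase the argument via local sections and to treat the two cones $\{x^\top A x > 0\}$ and $\{x^\top A x < 0\}$ separately is the right repair. The paper's proof never constructs $\phi_s$ and therefore neither detects nor addresses this obstruction.
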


\subsection{Diagonal group action}
We now generalize our approach to the setting where the function acts on tuples of vectors rather than on individual vectors. Specifically, let \(X\) be a \(G\)-invariant subset of \(\mathbb{R}^n \setminus U\), where
\[
U := \{ x \in \mathbb{R}^n : x^T A x = 0 \},
\]
and define the product space
\[
X_p := \underbrace{X \times X \times \cdots \times X}_{p\text{ times}}.
\]
On \(X_p\), the group \(G\) acts diagonally, that is,
\[
g \cdot (x_1, x_2, \ldots, x_p) = (g \cdot x_1,\, g \cdot x_2,\, \ldots,\, g \cdot x_p).
\]
We recall the standard result (see Appendix, Classification of $p$-tuples up to the $A$-orthogonal group) which states that any two \(p\)-tuples of vectors in \(\mathbb{R}^n\) that share the same Gram matrix (with respect to a non-degenerate symmetric bilinear form) are related by an isometry \citep{Jacobson-BasicAlgebra1}. Building upon this well-established fact, we derive a new decomposition theorem for functions equivariant under the diagonal \(G\)-action on tuples. This new result, formalized in the following theorem, extends our previous method to a broader and more general setting.

\begin{restatable}[Extended Decomposition for \(G\)-Equivariant Functions on Tuples]{theorem}{ExtendedDecomp}
\label{thm:main-result-extended}
Let \( A \) be a non-singular \( n \times n \) real matrix and define 
Let $G$ be orthogonal group as defined in eq.~\eqref{eq:Orthogonal group}. Let \( X \subset \mathbb{R}^n \) be a \( G \)-invariant set. For any positive integer \( p \), consider the diagonal action of \( G \) on the product space
\(
X_p := \underbrace{X \times X \times \cdots \times X}_{p\text{ times}},
\)
defined by, $\forall g \in G$
\[
g \cdot (x_1,  \dots, x_p) = (g \cdot x_1,\,  \dots,\, g \cdot x_p).
\]

Then every \( G \)-equivariant function  $f : X_p \to \mathbb{R}^m $ can be decomposed as
\begin{equation}
    f(x) = \phi_s \paren{\frac{x_1}{\Vert x_1 \Vert_A}} \cdot \phi_n \paren{\Gram(x)},
    \label{eq:extended-main-result}
\end{equation}
where  $x = \paren{x_1,\ldots,x_p}$ and Gram matrix
    $$\Gram(x_1,\ldots,x_p) = \begin{bmatrix}x_i^T A x_j \end{bmatrix}_{i, j \in [p]}$$
    encodes all pairwise inner products with respect to the quadratic form defined by \( A \).
\end{restatable}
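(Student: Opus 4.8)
The plan is to mimic the structure of the proof of Theorem~\ref{thm:main-result}, but with the orbit invariant "value of the quadratic form $x^TAx$" replaced by the full Gram matrix $\Gram(x_1,\dots,x_p)$. The key classification input is the cited fact that two $p$-tuples in $\mathbb{R}^n$ with the same Gram matrix (w.r.t.\ the non-degenerate form $A$) are related by an element of $G$; equivalently, $\Gram$ is a complete invariant for the diagonal $G$-action on $X_p$, so the quotient $X_p/G$ embeds into the space of realizable Gram matrices via $[x]\mapsto\Gram(x)$. First I would fix, for each realizable Gram matrix $M$ in the image of $X_p$, a canonical representative tuple $\sigma(M)\in X_p$ with $\Gram(\sigma(M))=M$; a natural choice is to put $x_1$ along (a rescaling of) a fixed reference direction and build the rest from $M$ (this is essentially a Gram–Schmidt / Cholesky-type construction adapted to the indefinite form $A$). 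Because $x_1\notin U$, the first coordinate of $\sigma(M)$ can be taken of the form $\|x_1\|_A\,e$ for a fixed anisotropic reference vector $e$, or more simply $\sigma(M)_1 = v(M)$ where $v(M)$ depends only on $M_{11}=x_1^TAx_1$.

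Next, given any tuple $x=(x_1,\dots,x_p)\in X_p$, let $M=\Gram(x)$. By the classification result there exists $g_x\in G$ with $g_x\cdot\sigma(M)=x$; equivalently $g_x$ carries the canonical representative of the orbit to $x$. Then $G$-equivariance of $f$ gives
\begin{equation}
f(x) = f\bigl(g_x\cdot\sigma(M)\bigr) = g_x\cdot f\bigl(\sigma(M)\bigr).
\end{equation}
Now I define $\phi_n(M) := f(\sigma(M))$, which by construction depends only on $\Gram(x)$, and I need to extract the "$g_x$ acting on the first normalized coordinate" piece as $\phi_s$. Since $\sigma(M)_1$ depends only on $M_{11}$ and points in a fixed direction, the group element $g_x$ maps $\sigma(M)_1$ to $x_1$, hence $g_x$ maps the normalized vector $\sigma(M)_1/\|\sigma(M)_1\|_A$ (a fixed reference point on a single orbit, independent of $x$) to $x_1/\|x_1\|_A$; this lets me set $\phi_s(x_1/\|x_1\|_A) := g_x \cdot (\text{that fixed reference element of } G)$, exactly as in the single-vector theorem. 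Rearranging, $g_x\cdot f(\sigma(M)) = \phi_s(x_1/\|x_1\|_A)\cdot\phi_n(\Gram(x))$, which is the claimed decomposition \eqref{eq:extended-main-result}.

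The two points that need care are: (i) \emph{well-definedness of $\phi_s$}, i.e.\ the ambiguity in choosing $g_x$. Any two valid choices differ by an element of the stabilizer of $\sigma(M)$, so I must check that this stabilizer acts trivially on the relevant data. For $p\ge 2$ with $\sigma(M)$ spanning enough of $\mathbb{R}^n$ the stabilizer may already be trivial; in general, as in Theorem~\ref{thm:main-result}, one argues that the stabilizer of $\sigma(M)$ fixes $x_1/\|x_1\|_A$ and acts compatibly on $f(\sigma(M))$, so the product $g_x\cdot f(\sigma(M))$ is independent of the choice — this is the main obstacle and the place where the "$\phi_s$ operates on a single orbit" remark is doing real work. (ii) \emph{Measurability/regularity of the section} $\sigma$: one wants $\sigma$ (hence $\phi_n$, $\phi_s$) to inherit enough regularity, which follows from choosing $\sigma$ via an explicit continuous Cholesky-type formula on the (locally closed) set of realizable Gram matrices, paralleling Proposition~\ref{prop:regularity}. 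I would present (i) carefully and treat (ii) briefly, deferring the explicit construction of $\sigma$ to the appendix.
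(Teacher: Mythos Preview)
Your overall strategy coincides with the paper's: swap the single-vector orbit characterization (Theorem~\ref{thm:orbit-space}) for the Gram-matrix classification of $p$-tuples (Theorem~\ref{thm:classification-ptuples-isometry-A}), then rerun the proof of Theorem~\ref{thm:main-result}. The paper's own proof is essentially one sentence to that effect.

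The gap is precisely where you put your finger, but it is worse than you suggest and your proposed resolution does not work. The ambiguity you analyze---that $g_x$ is determined only up to the stabilizer of $\sigma(M)$---is actually harmless: if $h$ fixes $\sigma(M)$ then $h\cdot f(\sigma(M))=f(h\cdot\sigma(M))=f(\sigma(M))$ by equivariance, so $g_x\cdot f(\sigma(M))$ does not depend on that choice. The real failure is the line ``this lets me set $\phi_s(x_1/\|x_1\|_A):=g_x\cdots$'': the element $g_x$ depends on the \emph{entire} tuple $x$, and cannot in general be chosen as a function of $x_1/\|x_1\|_A$ alone. What controls this is the stabilizer of $x_1$ (equivalently of the fixed reference direction $e$), not the stabilizer of $\sigma(M)$. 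Concretely, take $G=O(3)$, $A=I$, $p=2$, and $f(x_1,x_2)=x_2$ with the standard action on the output. Both $(e_1,e_2)$ and $(e_1,e_3)$ have $x_1/\|x_1\|_A=e_1$ and $\Gram=I_2$, yet $f$ returns $e_2$ and $e_3$ respectively; hence no pair $(\phi_s,\phi_n)$ of the asserted form can reproduce $f$. So the ``proceeds analogously'' step, which the paper also invokes without detail, cannot be completed as stated: one needs either an additional hypothesis (e.g.\ that $f(x)$ is fixed by $\mathrm{Stab}_G(x_1)$, which does hold in the $p=1$ case since then $f(x)$ lies on the line through $x_1$) or a larger input to $\phi_s$ than just the normalized first coordinate.
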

Please refer to the appendix for the proofs of all the Theorems and propositions.
\begin{remark}
When \(A\) is non-singular, Theorem~\ref{thm:main-result} is a special case of Theorem~\ref{thm:main-result-extended}. Notably, Theorem~\ref{thm:main-result} remains applicable even when \(A\) is singular. In contrast, Theorem~\ref{thm:main-result-extended} would require further modifications to accommodate the singular case, and we leave these extensions for future work.
\end{remark}

\subsection{Learning Equivariant Functions}
Given data \(\{\left(x_i, y_i = f(x_i)\right)\}_{i \in [m]}\), where \(f\) is a \(G\)-equivariant function for some known or unknown orthogonal group \(G\) that preserves the quadratic form \(x^T A_0 x\), we implement the functions \(\phi_s\) and \(\phi_n\) through neural networks. We denote these networks by the same notation, with \( \phi_s \) and \( \phi_n \) representing the scale-invariant and norm-invariant components of the function, respectively.

\subsubsection{$G$ is Known}
When the group \( G \) is known, we fix the corresponding quadratic form in our network architecture, specifically fixing the matrix \( A \). In this case, we aim to minimize the following loss function:
 \begin{equation}
     \arg \min \limits_{\theta} \sum \limits_i \mathcal{L} \paren{y_i, \phi_s\paren{\frac{x}{\Vert x \Vert_A}} \cdot \phi_n\paren{\Vert x \Vert_A}}
 \label{eq:obj-sym-known}    
 \end{equation}
where \(\theta = \left\{\theta_s, \theta_n\right\}\) are the weights of the networks corresponding to \(\phi_s\) and \(\phi_n\), respectively.

\subsubsection{$G$ is Unknown: Symmetry Discovery Framework}
In this case, the goal is to discover the underlying symmetry group \( G \) by learning the appropriate matrix \( A \). We seek to minimize the following loss function:
 \begin{equation}
     \arg \min \limits_{A, \theta} \sum \limits_i \mathcal{L} \paren{y_i, \phi_s\paren{\frac{x}{\Vert x \Vert_A}} \cdot \phi_n\paren{\Vert x \Vert_A}},
 \label{eq:obj-sym-unknown}    
 \end{equation}
Thus, discovering the underlying group \( G \) corresponds to learning the appropriate symmetric matrix \( A \). However, as discussed earlier, this problem reduces to learning the appropriate orthonormal matrix \( U \) and diagonal matrix \( D \) due to the diagonalization of the symmetric matrix \( A \). Therefore, we can rewrite the objective function in eq.~\eqref{eq:obj-sym-unknown} as:
 \begin{equation}
      \arg \min \limits_{U, D, \theta} \sum \limits_i \mathcal{L} \paren{y_i, \phi_s\paren{\frac{\hat{x}}{\Vert \hat{x} \Vert_D}} \cdot \phi_n\paren{\Vert \hat{x} \Vert_D}},
     \label{eq:obj-diag}
 \end{equation}
where $\hat{x} = Ux$,  \( U \in O(n) \) and \( D \) is a diagonal matrix. In this formulation, \( U \) captures the orthogonal transformation and \( D \) scales the components of the input \( x_i \), thereby enabling the discovery of the underlying symmetry group. The block diagram of our proposed method is depcited in Figure~\ref{fig:block_diagram_1}.


For invariance tasks, we use only the \(\phi_n\) network and omit \(\phi_s\). When the group acts on the input domain via a product action, we adopt the function decomposition from \eqref{eq:extended-main-result}. This decomposition is applied both when the group is explicitly known, where we incorporate it into \eqref{eq:obj-sym-known}, and when the group is unknown, in which case it is used in \eqref{eq:obj-sym-unknown}.

\section{Discussion}
\subsection{Existing Canonical Forms}
We would like to highlight that, for $O(n)$, there exist various standard forms for equivariant functions, depending on the output tensor types and the group action. These forms can be adapted for generic orthogonal groups. However, they change depending on the tensor type and the group action, unlike our method, which provides a common form irrespective of the tensor type and the group action.

\subsection{Uniqueness and Householder Reflection}

The proposed canonical decomposition of a $G$-equivariant function is unique up to a group transformation. Furthermore, under such a transformation, the output of the scale-invariant component of the function is equivalent to a Householder reflection transformation, $R$. This can be expressed as:
\begin{equation}
    \Phi_s\left(\frac{x}{\Vert x \Vert_A}\right) = \tau\left(I - \frac{2ww^T A}{w^T A w}\right) \tau^{-1}
    \label{eq:householder}
\end{equation}
where $\tau$ is an appropriate transformation that maps the group $G$ to the standard indefinite orthogonal group $O(p,q)$.

A key property of a Householder reflection is that it is its own inverse (i.e., $R = R^{-1}$). This insight leads to a significant simplification of our proposed form, particularly when the group action on the function's output is a conjugation:
\begin{equation}
    f(g \cdot x) = g \cdot f(x) = g f(x) g^{-1}
    \label{eq:conjugation}
\end{equation}

This simplification is formalized in the following proposition.


\begin{restatable}{proposition}{propSymmForm}
\label{prop:symm_form}
Let $f$ be a $G$-equivariant function where the group action on the output is a conjugation, as defined in eq.~\eqref{eq:conjugation}. Then, the function $f(x)$ can be expressed in the symmetric form:
\begin{equation}
    f(x) = \Phi_s\left(\frac{x}{\Vert x \Vert_A}\right) \Phi_n\left(\Vert x \Vert_A\right) \Phi_s\left(\frac{x}{\Vert x \Vert_A}\right)
    \label{eq:symmetric_form}
\end{equation}
\end{restatable}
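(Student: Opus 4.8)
The plan is to read off the symmetric form directly from the canonical decomposition of Theorem~\ref{thm:main-result}, using two ingredients: the freedom in choosing the scale-invariant factor, and the fact that Householder reflections are involutions.

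First I would invoke Theorem~\ref{thm:main-result} to write $f(x) = \phi_s\paren{x/\Vert x \Vert_A}\cdot \phi_n\paren{\Vert x \Vert_A}$, with $\phi_s:\euclidean{n}\to G$ and $\phi_n:\euclidean{}\to\euclidean{m}$, where ``$\cdot$'' denotes the $G$-action on the output. I would then specialize this action to conjugation, as in \eqref{eq:conjugation}: setting $R := \phi_s\paren{x/\Vert x \Vert_A}\in G$ and $M := \phi_n\paren{\Vert x \Vert_A}$, the decomposition becomes $f(x) = R\,M\,R^{-1}$. So the entire claim reduces to showing that $\phi_s$ may be chosen so that $R$ is its own inverse.

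The crux is the Householder representation \eqref{eq:householder}. Because the argument of $\phi_s$ is $A$-normalized, it lies on a single $G$-orbit (Theorem~\ref{thm:orbit-space}), and the element of $G$ carrying a fixed orbit representative $u$ to $x/\Vert x \Vert_A$ can, by Witt's extension / Cartan--Dieudonn\'e applied to the form $A$, be realized as (a $\tau$-conjugate of) a Householder reflection $R = \tau\paren{I - \tfrac{2ww^TA}{w^TAw}}\tau^{-1}$ with $w = x/\Vert x \Vert_A - u$ (and $\tau$ the transformation taking $G$ to $O(p,q)$). A one-line computation gives $\paren{I-\tfrac{2ww^TA}{w^TAw}}^2 = I$, since $w^TAw$ is a scalar and $w w^T A\,w w^T A = (w^TAw)\,ww^TA$; conjugation by $\tau$ preserves this, so $R^2 = I$, i.e. $R = R^{-1}$. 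Substituting $R^{-1}=R$ into $f(x)=R\,M\,R^{-1}$ yields $f(x) = R\,M\,R = \Phi_s\paren{x/\Vert x \Vert_A}\,\Phi_n\paren{\Vert x \Vert_A}\,\Phi_s\paren{x/\Vert x \Vert_A}$ with $\Phi_s := \phi_s$ and $\Phi_n := \phi_n$, which is exactly \eqref{eq:symmetric_form}.

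I expect the main obstacle to be the justification that $\phi_s$ can always be taken to output a \emph{single} Householder reflection rather than an arbitrary isometry. This is the Cartan--Dieudonn\'e statement for the bilinear form associated with $A$: the generic case, where $w = x/\Vert x \Vert_A - u$ is $A$-anisotropic, gives a single reflection directly; the degenerate case, where $w$ is $A$-isotropic, requires a short case analysis (replacing $w$ by $x/\Vert x \Vert_A + u$, or composing two reflections and re-choosing the orbit representative so that the resulting factor remains an involution). Once that selection lemma is in place, the algebraic involution identity $R^2=I$ is immediate and the proposition follows.
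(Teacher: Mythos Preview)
Your proposal is correct and follows essentially the same route as the paper: invoke the decomposition of Theorem~\ref{thm:main-result}, specialize the output action to conjugation to get $f(x)=R\,M\,R^{-1}$, then use the Householder representation \eqref{eq:householder} to conclude $R=R^{-1}$. In fact you supply more than the paper does---the paper simply cites \eqref{eq:householder} and the involution property as given, whereas you sketch the Witt/Cartan--Dieudonn\'e justification and flag the isotropic-$w$ edge case.
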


\section{Experiments}

We evaluate \textbf{$G$-Ortho-Nets}, our proposed method for learning $G$-equivariant functions through decomposition into norm-invariant and scale-invariant components as given in eq. \eqref{eq:main-result}. The quadratic form $A$ is learned jointly with the decomposed functions, enabling simultaneous discovery of the underlying symmetry group $O(p,q)$ which is defined in eq. \eqref{eq:Orthogonal group}.

\subsection{Baseline Methods}
\begin{table*}[!htbp]
\centering
\scriptsize
\caption{Suitability of related symmetry discovery methods for our setting. Only Augerino and LieGG provide both a usable prediction model and applicability to continuous equivariance tasks, motivating our choice of baselines. Our method further distinguishes itself by offering an \textbf{interpretable} model that trains in a \textbf{single stage}. \checkmark = supported, \texttimes = not supported/limited.}
\label{tab:baseline_suitability}
\renewcommand{\arraystretch}{1.4}
\setlength{\tabcolsep}{8pt}
\begin{tabular}{@{}lcccl@{}}
\toprule
\textbf{Method} & 
\textbf{Prediction} & 
\textbf{Continuous} & 
\textbf{Interpretable} & 
\textbf{Limitations/Notes} \\
& \textbf{ Model} & \textbf{Equivariance} & \textbf{Prediction Model} & \\
\midrule
\cellcolor{blue!8}\textbf{G-Ortho-Nets (Ours)} & 
\cellcolor{blue!8}\checkmark & 
\cellcolor{blue!8}\checkmark & 
\cellcolor{blue!8}\checkmark & 
\cellcolor{blue!8}\textit{Interpretable prediction model, single-stage} \\
\midrule[\heavyrulewidth]
\rowcolor{gray!20}
Augerino\textsuperscript{\textdagger} & \checkmark & \checkmark & \texttimes & Sample averaging required at inference \\
\rowcolor{gray!20}
LieGG\textsuperscript{\textdagger} & \checkmark & \checkmark & \texttimes & Post-training analysis needed \\
\midrule
LieGAN & \texttimes & \checkmark & \texttimes & No prediction model \\
LIG & \checkmark & \texttimes & \texttimes & Requires synchronized output flows, two stage \\
SDBA & \checkmark & \texttimes & \texttimes & Limited to single generator, two stage \\
BNN & \checkmark & \texttimes & \checkmark & Appropriate for discrete groups \\
\bottomrule
\multicolumn{5}{@{}l@{}}{\textsuperscript{\textdagger}Selected as baselines (only methods suitable for continuous equivariance with prediction models)}
\end{tabular}
\end{table*}

We compare $G$-Ortho-Nets against two recent approaches for symmetry discovery:

\paragraph{LieGG} (\cite{moskalev2022liegg}) retrieves infinitesimal generators of symmetries by solving a matrix nullspace equation derived from Lie group theory. Using singular value decomposition of the polarization matrix, it extracts the Lie algebra basis and quantifies \emph{symmetry bias} and \emph{symmetry variance}, measuring how closely learned generators approximate true underlying symmetries.

\paragraph{Augerino} (\cite{benton2020learning}) learns invariances and equivariances by parameterizing a distribution over data augmentations. The model jointly optimizes network and augmentation parameters to discover symmetries directly from training data. Unlike LieGG's post-hoc extraction, Augerino induces soft equivariance during training.

The rationale for selecting these two methods as baselines is summarized in Table~\ref{tab:baseline_suitability}.

\subsection{Experimental Tasks}

We evaluate on three tasks spanning synthetic regression, physical simulation, and high-energy physics applications. Table~\ref{tab:tasks} summarizes the tasks and their corresponding symmetry groups.

\begin{table}[h]
\centering
\caption{Experimental tasks and symmetry groups.}
\label{tab:tasks}
\scriptsize
\begin{tabular}{@{}lc@{}}
\toprule
\textbf{Task} & \textbf{Group} \\
\midrule
Synthetic regression & $O(2,2)$ \\
Moment of inertia & $O(3)$ \\
Top quark tagging & $O(1,3)$ \\
\bottomrule
\end{tabular}
\end{table}

\subsubsection{Task 1: Synthetic Regression}

We construct a target function exhibiting equivariance under the indefinite orthogonal group $O(2,2)$:
\begin{equation}
    f(x) = 9\left(x x^\top A x x^\top A\right) + 2\left(x x^\top A\right),
\end{equation}
where $A \in \mathbb{R}^{4 \times 4}$ is symmetric and indefinite with signature $(2,2)$—two positive and two negative eigenvalues. The function satisfies $f(g \cdot x) = g \, f(x) \, g^\top$ for all $g \in G$.
This controlled setting allows precise evaluation of symmetry recovery.

\subsubsection{Task 2: Moment of Inertia Prediction}

Given $n$ point masses $m_i$ at positions $x_i \in \mathbb{R}^3$, we predict the inertia tensor:
\begin{equation}
    f(x,m) = \sum_{i=1}^n m_i \left( \|x_i\|^2 I - x_i x_i^\top \right).
\end{equation}
This physically meaningful function is equivariant under the group $O(3)$:
\begin{equation}
    f(g \cdot x, m) = g \, f(x,m) \, g^\top, \quad \forall g \in O(3).
\end{equation}
This task tests the ability to recover well-known physical symmetries from data.

\subsubsection{Task 3: Top Quark Tagging}

We classify jets as originating from top quarks versus light quarks using jet constituent four-vectors. The classification label is invariant under Lorentz transformations corresponding to the group
\begin{equation}
    G = O(1,3), \quad A = \eta = \mathrm{diag}(1, -1, -1, -1).
\end{equation}
This high-dimensional real-world task evaluates Lorentz invariance discovery in particle physics applications.

\subsection{Evaluation Metrics}

We assess performance using three complementary metrics:

\paragraph{Prediction Accuracy.} We measure the mean squared error (MSE) for regression tasks (Tasks 1--2) and the classification accuracy for the tagging task (Task 3). This quantifies how well models approximate the target function.

\paragraph{Quadratic Form Recovery.} We compute the cosine similarity between the learned and true quadratic forms:
\begin{equation}
    \cos(A_0, A_{\text{learnt}}) = \frac{\langle A_{\text{learnt}}, A_0 \rangle_F}{\|A_{\text{learnt}}\|_F \, \|A_0\|_F},
\end{equation}
where $\langle \cdot, \cdot \rangle_F$ denotes the Frobenius inner product. For $G$-Ortho-Nets, $A$ is learned directly; for LieGG and Augerino, we recover $A$ from the learned generators $X$ via the constraint $X^\top A + A X = 0$.

\paragraph{Lie Algebra Comparison.} We compare the learned Lie algebra $\mathfrak{g}_{\text{learnt}}$ with the true Lie algebra $\mathfrak{g}_0$ using the principal angles. Given orthonormal bases for both subspaces, the principal angles $\theta_1 \leq \theta_2 \leq \cdots \leq \theta_k \in [0, \pi/2]$ characterize the geometric relationship between them. We compute the projection distance as:
\begin{equation}
    d_{\text{PA}}(\mathfrak{g}_0, \mathfrak{g}_{\text{learnt}}) = \sqrt{\sum_{i=1}^{k} \sin^2(\theta_i)},
\end{equation}
which measures the overall misalignment between the two Lie algebra subspaces. A value of 0 indicates perfect alignment, while larger values indicate greater deviation.

\subsection{Results}


\subsubsection{Quantitative results}
\Cref{tab:regression_tasks,tab:classification_task} summarize performance across all tasks. 
We evaluate prediction quality (MSE or accuracy), recovery of the quadratic form via 
$\cos(A_0,A_{\text{learnt}})$, and Lie-algebra alignment using projection distance 
$d_{\text{PA}}(\mathfrak{g}_0,\mathfrak{g}_{\text{learnt}})$ (principal angles). 
While LieGG achieves competitive performance in certain cases, it does not yield an interpretable prediction model and requires additional post-hoc analysis to extract Lie algebra generators.

\subsubsection{Qualitative results}
\Cref{fig:results} compares ground-truth and learned quadratic forms for the top-tagging and synthetic tasks, 
demonstrating faithful recovery of the target structure. $G$-Ortho-Nets achieve strong performance across both metrics and tasks. 
By explicitly parameterizing the quadratic form $A$, our method enables direct symmetry discovery, rather than inferring it indirectly from generators. 
On the top-tagging benchmark in particular, it attains state-of-the-art accuracy while most closely recovering the Lorentz structure, highlighting that encoding physically meaningful symmetries can improve generalization in high-energy physics.


\begin{table*}[t]
\centering
\caption{Regression tasks. Best results in \textbf{bold}. 
$\cos(A_0, A_{\text{learnt}})$ is cosine similarity between true and learned $A$; 
$d_{\text{PA}}(\mathfrak{g}_0, \mathfrak{g}_{\text{learnt}})$ is projection distance from principal angles.}
\label{tab:regression_tasks}
\begin{tabular}{@{}lcccc@{}}
\toprule
\textbf{Task} & \textbf{Method} & \textbf{Pred. Error} & \textbf{$\cos(A_0, A_{\text{learnt}})$} & \textbf{$d_{\text{PA}}(\mathfrak{g}_0, \mathfrak{g}_{\text{learnt}})$} \\
\midrule
\multirow{3}{*}{Synthetic} 
    & $G$-Ortho-Nets & $1.80 \times 10^{-3}$ & \textbf{0.99} & \textbf{0.24} \\
    & LieGG          & $\mathbf{7.53 \times 10^{-4}}$ & 0.93 & 1.80 \\
    & Augerino       & $2.63 \times 10^{0}$ & 0.30 & 1.99 \\
\midrule
\multirow{3}{*}{Inertia} 
    & $G$-Ortho-Nets & $\mathbf{1.88 \times 10^{-3}}$ & \textbf{0.99} & 0.15 \\
    & LieGG          & $2.61 \times 10^{-3}$ & \textbf{0.99} & \textbf{0.06} \\
    & Augerino       & $7.86 \times 10^{-3}$ & 0.57 & 1.26 \\
\bottomrule
\end{tabular}
\end{table*}

\begin{table*}[t]
\centering
\caption{Top-tagging classification. Best results in \textbf{bold}.}
\label{tab:classification_task}
\begin{tabular}{@{}lcccc@{}}
\toprule
\textbf{Task} & \textbf{Method} & \textbf{Accuracy} & \textbf{$\cos(A_0, A_{\text{learnt}})$} & \textbf{$d_{\text{PA}}(\mathfrak{g}_0, \mathfrak{g}_{\text{learnt}})$} \\
\midrule
\multirow{3}{*}{Top tagging} 
    & $G$-Ortho-Nets & \textbf{90.44} & \textbf{0.9999} & \textbf{0.0214} \\
    & LieGG          & 83.10 & 0.9991 & 1.0012 \\
    & Augerino       & 51.80 & 0.5000 & 2.1830 \\
\bottomrule
\end{tabular}
\end{table*}


\begin{figure}[t]
\centering
\begin{subfigure}[b]{0.47\columnwidth}
    \centering\includegraphics[width=\textwidth]{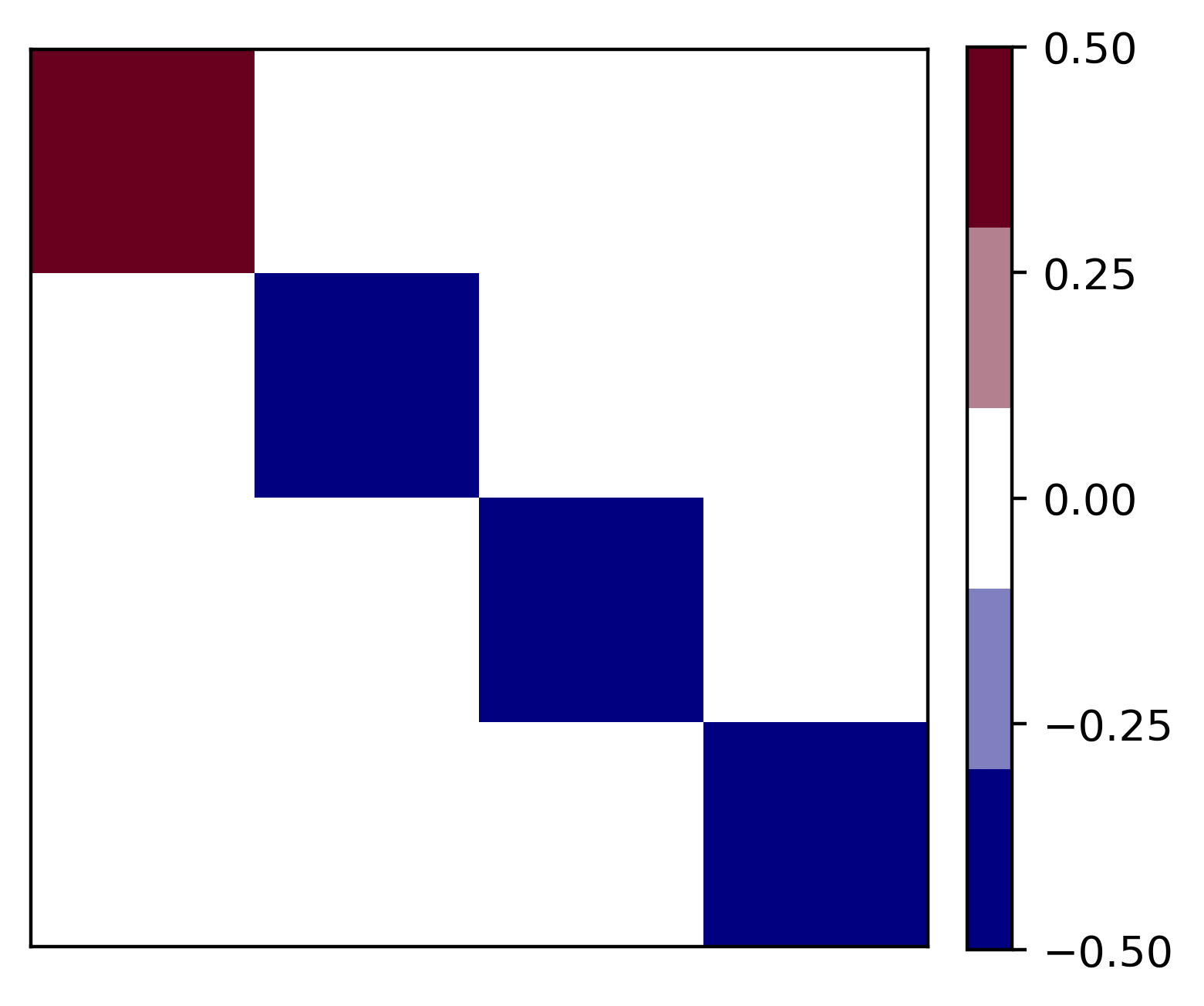}
    \caption{Ground-truth $A$}
\end{subfigure}
\hfill
\begin{subfigure}[b]{0.47\columnwidth}
    \centering\includegraphics[width=\textwidth]{Images/iclr_22_1.png}
    \caption{Learned $A$}
\end{subfigure}

\vskip\baselineskip

\begin{subfigure}[b]{0.47\columnwidth}
    \centering\includegraphics[width=\textwidth]{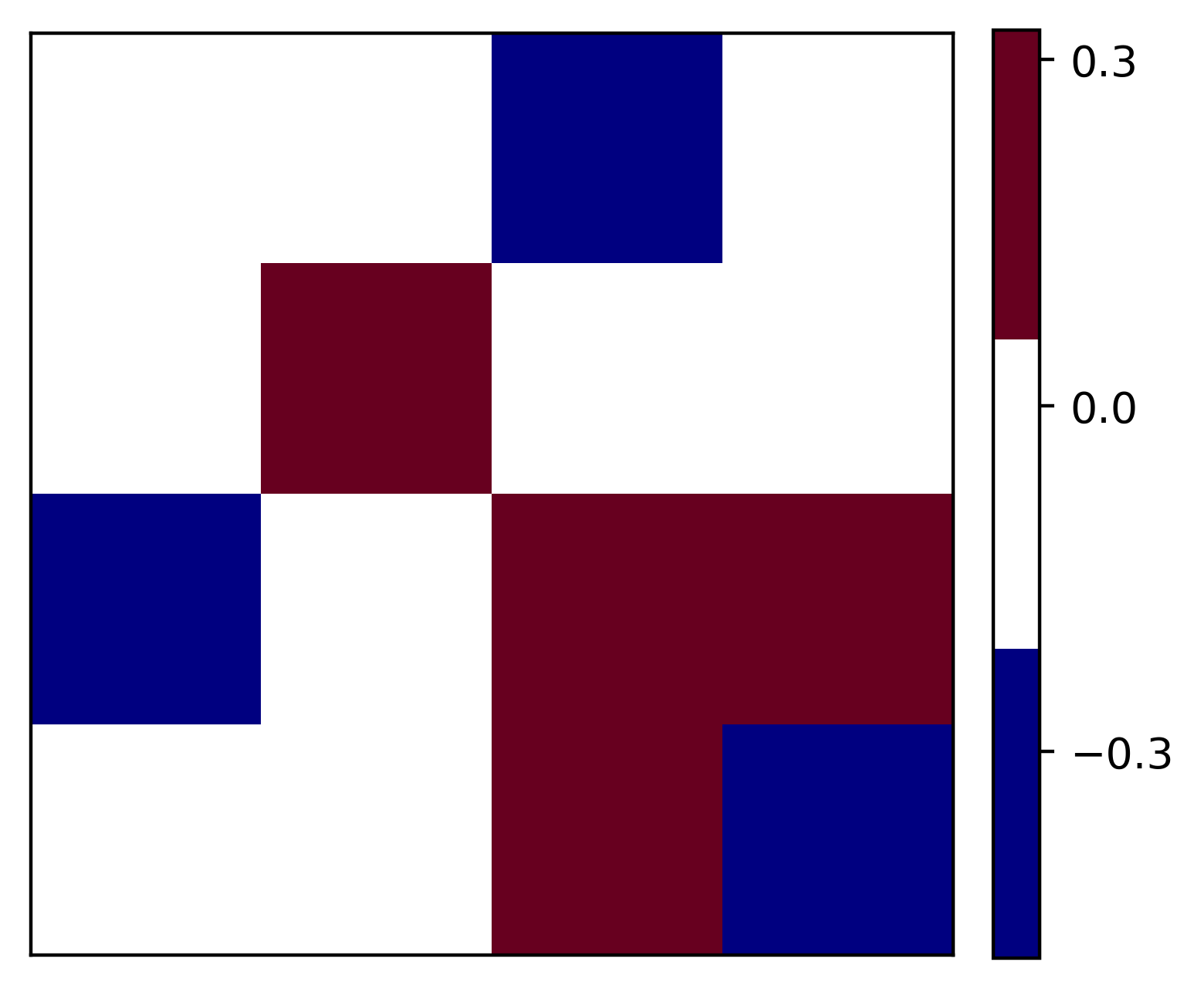}
    \caption{Ground-truth $A$}
\end{subfigure}
\hfill
\begin{subfigure}[b]{0.47\columnwidth}
    \centering\includegraphics[width=\textwidth]{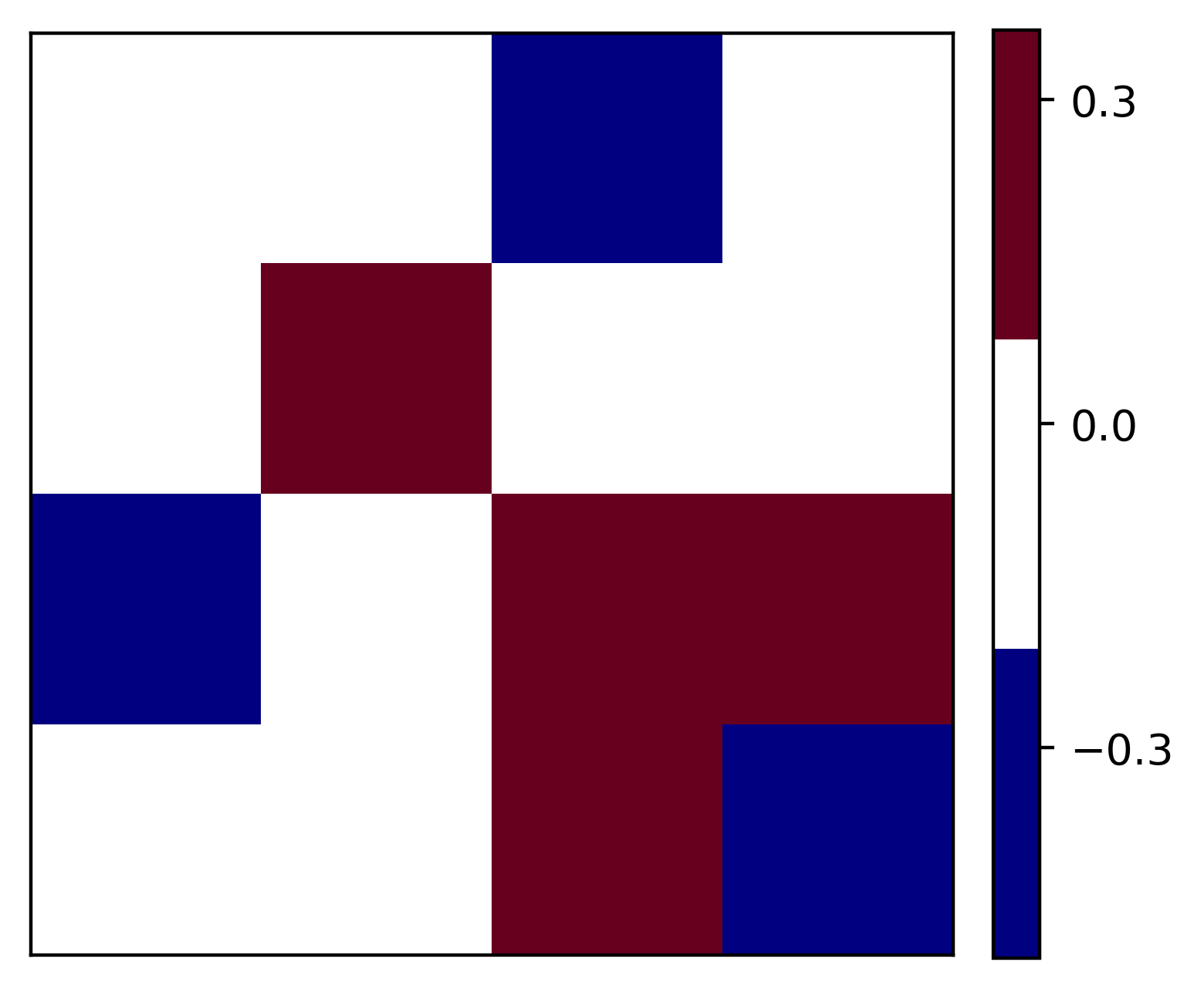}
    \caption{Learned $A$}
\end{subfigure}
\caption{Comparison of ground-truth and learned $A$ matrices for top-tagging (top) and synthetic regression (bottom) using $G$-Ortho-Nets.}
\label{fig:results}
\end{figure}

Please refer to the appendix for additional experiments and analysis.


\subsection{Limitations}
Our framework is currently limited to orthogonal groups, as these are exactly the transformations that preserve quadratic forms. While this restriction excludes more general classes such as affine or diffeomorphic symmetries, it provides a clear advantage: interpretability. By explicitly learning the quadratic form $A$, our model yields a physically meaningful and directly interpretable predictor, rather than only an implicit or post-hoc symmetry characterization. This trade-off mirrors the broader tension in symmetry learning between generality and interpretability. We view our approach as complementary to methods such as LieGAN, Augerino or LIG, which target broader classes but do not produce usable equivariant predictors.

\section{Conclusion}
We presented a quadratic-form framework for learning group-equivariant functions, focusing on orthogonal groups. We proved a canonical decomposition into \emph{norm-invariant} and \emph{scale-invariant} components, and extended it to tuples under diagonal actions, where the equivariant map splits into an angular term (from the normalized first vector) and a scale-invariant term (from the full Gram matrix), thereby capturing interdependencies while preserving symmetry. Experiments across synthetic and physics-inspired tasks showed reliable symmetry discovery and effective equivariant learning. Future work includes extending beyond orthogonal groups and accommodating richer group actions.

\bibliographystyle{plainnat}
\bibliography{example_paper}
\section*{Checklist}

\begin{enumerate}

  \item For all models and algorithms presented, check if you include:
  \begin{enumerate}
    \item A clear description of the mathematical setting, assumptions, algorithm, and/or model. [Yes] We provide a clear mathematical formulation of the problem and describe the proposed model and assumptions
    \item An analysis of the properties and complexity (time, space, sample size) of any algorithm. [No] We do not provide a formal complexity analysis, but we discuss empirical efficiency.
    \item (Optional) Anonymized source code, with specification of all dependencies, including external libraries. [No] Source code is not included in this submission, but we plan to release it in the camera-ready version.
  \end{enumerate}

  \item For any theoretical claim, check if you include:
  \begin{enumerate}
    \item Statements of the full set of assumptions of all theoretical results. [Yes] All assumptions underlying our theoretical results are explicitly stated
    \item Complete proofs of all theoretical results. [Yes] Complete proofs are provided in the Appendix.
    \item Clear explanations of any assumptions. [Yes] We provide clear explanations of all assumptions alongside the results.     
  \end{enumerate}

  \item For all figures and tables that present empirical results, check if you include:
  \begin{enumerate}
    \item The code, data, and instructions needed to reproduce the main experimental results (either in the supplemental material or as a URL). [No] We do include code, data to reproduce the main experimental results.
    \item All the training details (e.g., data splits, hyperparameters, how they were chosen). [Yes] Training details such as data splits and hyperparameters are described in the supplementary section.
    \item A clear definition of the specific measure or statistics and error bars (e.g., with respect to the random seed after running experiments multiple times). [No] We do not report evaluation metrics with error bars across multiple random seeds.  
    \item A description of the computing infrastructure used. (e.g., type of GPUs, internal cluster, or cloud provider). [Yes] The computing infrastructure, including GPU specifications are described in the supplementary section.
  \end{enumerate}

  \item If you are using existing assets (e.g., code, data, models) or curating/releasing new assets, check if you include:
  \begin{enumerate}
    \item Citations of the creator If your work uses existing assets. [Yes] We cite all assets used in our work with proper references. 
    \item The license information of the assets, if applicable. [Not Applicable] License information of the used assets is included where applicable.
    \item New assets either in the supplemental material or as a URL, if applicable. [Not Applicable] We do not release new assets in this work. 
    \item Information about consent from data providers/curators. [Not Applicable] No data requiring consent was collected. 
    \item Discussion of sensible content if applicable, e.g., personally identifiable information or offensive content. [Not Applicable] The work does not involve sensitive or personally identifiable content.  
  \end{enumerate}

  \item If you used crowdsourcing or conducted research with human subjects, check if you include:
  \begin{enumerate}
    \item The full text of instructions given to participants and screenshots. [Not Applicable] No human participants were involved.
    \item Descriptions of potential participant risks, with links to Institutional Review Board (IRB) approvals if applicable. [Not Applicable] No IRB approval was required. 
    \item The estimated hourly wage paid to participants and the total amount spent on participant compensation. [Not Applicable] No participant compensation was involved.
  \end{enumerate}

\end{enumerate}

\clearpage
\appendix
\thispagestyle{empty}

\onecolumn
\aistatstitle{Supplementary Material}

\section*{Index}
\vspace{-0.25\baselineskip}
\noindent\small

\normalsize

\noindent\rule{\linewidth}{0.4pt}

{
\small
\setlength{\parskip}{0pt}
\setlength{\itemsep}{0.3ex}
\renewcommand\labelitemi{$\triangleright$}

\begin{itemize}
\item \hyperref[sec:Additional Experiments]{Additional Experiments} \dotfill \pageref{sec:Additional Experiments}
\vspace{0.25cm}
\item \hyperref[sec:Notation and Standing Assumptions]{Notation and Standing Assumptions} \dotfill \pageref{sec:Notation and Standing Assumptions}
\vspace{0.25cm}  
\item \hyperref[sec:Additional Results and Complete Proofs]{Additional Results and Complete Proofs} \dotfill \pageref{sec:Additional Results and Complete Proofs}
\vspace{0.25cm}
  \begin{itemize}
      \item \hyperref[sec:canonical-alignment-posdef]{Canonical Alignment in the Positive Definite Case} \dotfill \pageref{sec:canonical-alignment-posdef}
      \vspace{0.25cm}
      \item \hyperref[sec:canonical-alignment-possemidef]{Canonical Alignment in the Positive Semi-Definite Case} \dotfill \pageref{sec:canonical-alignment-possemidef}
      \vspace{0.25cm}
      \item \hyperref[sec:canonical-alignment-indef]{Canonical Alignment in the Indefinite Case} \dotfill \pageref{sec:canonical-alignment-indef}
      \vspace{0.25cm}
      \item \hyperref[sec:orbit-space-proof]{Orbit Space and Quadratic-Form Norm Correspondence} \dotfill \pageref{sec:orbit-space-proof}
      \vspace{0.25cm}
      \item \hyperref[sec:decomposition]{Decomposition into Scale- and Norm-Invariant Components} \dotfill \pageref{sec:decomposition}
      \vspace{0.25cm}
      \item \hyperref[sec:reg-cond]{Regularity conditions} \dotfill \pageref{sec:reg-cond}
      \vspace{0.25cm}
      \item \hyperref[sec:diag-action-tuples]{Extension to Diagonal action} \dotfill \pageref{sec:diag-action-tuples}
      \vspace{0.25cm}
      \item \hyperref[sec:Householder-reflection]{Householder reflection} \dotfill \pageref{sec:Householder-reflection}
  \end{itemize}
\end{itemize}
}

\section{Additional Experiments}
\label{sec:Additional Experiments}
\subsection{Noise robustness}
\label{subsec:Noise robustness}
To evaluate the robustness of both methods to label corruption, we systematically vary the standard deviation of Gaussian noise added to the training labels. The underlying task is a synthetic polynomial regression problem, where the ground-truth function is given by $f(x) = 9 (x x^\top A)^2 + 2 (x x^\top A)$. The goal is to assess how prediction accuracy degrades under increasing noise levels and to compare the noise tolerance of LieGG and our proposed $G$-Ortho-Nets. 
\Cref{tab:label_noise_results} reports the prediction error across different noise levels, highlighting the relative stability of each method.

\begin{table}[!h]
\centering
\caption{Comparison of $G$-Ortho-Nets and LieGG under varying label noise levels. 
Lower prediction error (MSE) and projection distance $d_{\text{PA}}$ indicate better performance, while higher cosine similarity $\cos(A_0, A_{\text{learnt}})$ indicates closer alignment with the ground-truth quadratic form.}
\label{tab:label_noise_results}
\begin{tabular}{@{}lcccc@{}}
\toprule
\textbf{Std dev} & \textbf{Method} & \textbf{Pred. Error (MSE)} & $\boldsymbol{\cos(A_0, A_{\text{learnt}})}$ & $\boldsymbol{d_{\text{PA}}(\mathfrak{g}_0, \mathfrak{g}_{\text{learnt}})}$ \\ 
\midrule
\multirow{2}{*}{1.0}   
    & $G$-Ortho-Nets & \textbf{1.0161}  & \textbf{0.9778} & \textbf{0.3504} \\
    & LieGG          & 43.7929 & 0.2900 & 1.7526 \\ 
\midrule
\multirow{2}{*}{0.5}   
    & $G$-Ortho-Nets & \textbf{0.2633}  & \textbf{0.9882} & \textbf{0.2872} \\
    & LieGG          & 10.9812 & 0.2468 & 1.7108 \\ 
\bottomrule
\end{tabular}
\end{table}

The results in \Cref{tab:label_noise_results} demonstrate the superior robustness and structural fidelity of the proposed $G$-Ortho-Nets compared to the LieGG baseline under increasing label noise. 
Even when subjected to substantial noise ($\sigma = 1$), $G$-Ortho-Nets maintain a remarkably low prediction error (MSE $\approx 1.0$) and a high cosine similarity with the ground-truth quadratic form ($\cos(A_0, A_{\text{learned}}) \approx 0.98$), indicating that the learned symmetry remains closely aligned with the true underlying structure. 
In contrast, the LieGG method exhibits severe degradation, with prediction errors exceeding two orders of magnitude and a sharp drop in alignment metrics. 
These findings highlight that explicitly parameterizing the quadratic form and enforcing equivariance through the proposed  decomposition enables stable learning of both the predictive function and the latent symmetry, even in the presence of substantial label corruption.

\section{Notation and Standing Assumptions}
\label{sec:Notation and Standing Assumptions}
\begin{itemize}
    \item \textbf{Quadratic form and group definition.}  
    Let $A \in \mathbb{R}^{n\times n}$ be a symmetric matrix.  
    Define the generalized orthogonal group
    \[
        G \;=\; \{\, Q \in \mathrm{GL}_n \;:\; Q^\top A Q = A \,\}.
    \]

    \item \textbf{Indefinite norm.}  
    For $x \in \mathbb{R}^n$, define the $A$-norm (or pseudo-norm)
    \[
        \|x\|_{A} \;=\; \operatorname{sign}(x^\top A x)\,\sqrt{|x^\top A x|}.
    \]
    This norm generalizes the Euclidean case and may take positive or negative values depending on the signature of $A$.

    \item \textbf{Null cone.}  
    We exclude the \emph{null cone}
    \[
        U \;=\; \{\, x \in \mathbb{R}^n : x^\top A x = 0 \,\},
    \]
    which has Lebesgue measure zero.  
    All maps and actions below are defined on $\mathbb{R}^n \setminus U$.

    \item \textbf{Group action on inputs.}  
    The group $G$ acts on the input space by the standard linear action
    \[
        Q \cdot x \;=\; Qx.
    \]

    \item \textbf{Group action on outputs.}  
    When the output space is matrix-valued, we consider the conjugation action
    \[
        Q \cdot Y \;=\; Q Y Q^{-1},
    \]
    which preserves the natural geometric structure of the output space.
\end{itemize}

\section{Additional Results and Complete Proofs}
\label{sec:Additional Results and Complete Proofs}
Additional theoretical results and the complete proof of Theorem. \ref{thm:orbit-space} are provided here including supporting Lemmas.

\subsection{Canonical Alignment in the Positive Definite Case}
\label{sec:canonical-alignment-posdef}
We begin with the positive definite case, where the quadratic form \(x^\top D x\) 
defines a \emph{weighted Euclidean geometry} determined by the positive diagonal 
entries of \(D\). In this setting, all directions are comparable through the group 
of transformations that preserve this weighted norm. The goal of this subsection 
is to establish a \emph{canonical alignment result} showing that any vector can be 
mapped to a fixed reference direction through an appropriate transformation in the 
orthogonal group \(G = \{A \in \mathbb{R}^{n \times n} : A^\top D A = D\}\). 
This alignment construction will serve as a fundamental building block in the proof 
of \textbf{Theorem~\ref{thm:orbit-space}}, which characterizes the orbits of this group 
and their correspondence with quadratic-form norms.
\begin{lemma}
\label{lemma:canonical}
Let \( D \in \euclidean{n \times n} \) be a diagonal matrix defined as:
\begin{equation}
    D = \sum_{i=1}^n d_i E_{i,i},
    \label{eq:diag-all-pos}
\end{equation}
where \( d_1, \dots, d_n > 0 \), and  \( E_{i,j} = e_ie_j^T \) with \( e_i \) denoting the standard basis vectors in \(\euclidean{n}\). Let \( x = \sum_{i=1}^n x_i e_i \in \euclidean{n} \). Let \( G \) be the orthogonal group preserving the quadratic form \( x^T D x \). Then, there exists \( A \in G \) such that:
\begin{equation}
    A x = \alpha e_1,
    \label{eq:lemma1-claim}
\end{equation}
where \( \alpha \) is an appropriate scaling factor such that \( x^T D x = x^T A^T D A x \).
\end{lemma}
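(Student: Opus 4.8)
## Proof Plan for Lemma~\ref{lemma:canonical}

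The plan is to reduce the positive-definite case to the ordinary orthogonal group $O(n)$ by a diagonal change of variables, perform the alignment there using a standard rotation (e.g.\ a Householder reflection or a Givens-type rotation sending any unit vector to $e_1$), and then transport the result back. Concretely, write $D = S^2$ where $S = \operatorname{diag}(\sqrt{d_1},\dots,\sqrt{d_n})$ is the positive-definite square root, which exists and is invertible since all $d_i > 0$. Under the substitution $z = Sx$, the quadratic form becomes $x^\top D x = z^\top z = \|z\|_2^2$, so the $D$-preserving group $G$ is conjugate to $O(n)$ via $A \mapsto S A S^{-1}$: indeed $A^\top D A = D$ if and only if $(SAS^{-1})^\top (SAS^{-1}) = I$.

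Next I would carry out the alignment in the Euclidean picture. Let $z = Sx$ and set $r = \|z\|_2 = \sqrt{x^\top D x} > 0$ (strictly positive since $x \notin U$, i.e.\ $x^\top D x \neq 0$, combined with positive-definiteness forcing it to be positive). There is a standard orthogonal matrix $Q \in O(n)$ with $Q z = r\, e_1$: one can take the Householder reflection $Q = I - 2\frac{ww^\top}{w^\top w}$ with $w = z - r e_1$ (handling the degenerate case $z = r e_1$ by taking $Q = I$, and if $z$ is antiparallel one reflects through a hyperplane containing $e_1$). Then define $A := S^{-1} Q S$. By the conjugation relation just established, $A \in G$. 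Computing, $A x = S^{-1} Q S x = S^{-1} Q z = S^{-1}(r e_1) = \frac{r}{\sqrt{d_1}}\, e_1$, so the claim holds with $\alpha = r/\sqrt{d_1} = \sqrt{x^\top D x}/\sqrt{d_1}$.

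Finally I would verify the norm-preservation condition $x^\top D x = x^\top A^\top D A x$. This is immediate from $A \in G$, since membership in $G$ means $A^\top D A = D$; alternatively one checks directly that $(Ax)^\top D (Ax) = \alpha^2 e_1^\top D e_1 = \alpha^2 d_1 = r^2 = x^\top D x$.

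The only real subtlety — not so much an obstacle as a point requiring care — is the handling of the degenerate cases in the Householder construction (when $z$ is already a positive or negative multiple of $e_1$), where the vector $w = z - re_1$ may vanish or one must instead reflect using a hyperplane through $e_1$; in all cases an explicit orthogonal $Q$ with $Qz = re_1$ exists, so the argument goes through uniformly. Everything else is routine linear algebra, and the positive-definiteness of $D$ is exactly what guarantees the square root $S$ is real and invertible and that $r > 0$.
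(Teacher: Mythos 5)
Your proof is correct and follows essentially the same route as the paper's: both arguments diagonally conjugate the $D$-orthogonal group to $O(n)$ via $S = \operatorname{diag}(\sqrt{d_1},\dots,\sqrt{d_n})$, align $Sx$ with $e_1$ by a Euclidean orthogonal transformation, and transport back via $A = S^{-1}QS$, with the verification that $A^\top D A = D$ coming from the conjugation identity $S^{-1}DS^{-1}=I$. The only (cosmetic) difference is that you make the aligning orthogonal matrix explicit as a Householder reflection, while the paper simply invokes the existence of some $R\in O(n)$ with $RSx=\beta e_1$.
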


\begin{proof}
Define \( S = \sum_{i=1}^n \sqrt{d_i} e_i e_i^T \). Then, we have:
\begin{equation}
    S^{-1} D S^{-1} = I,
    \label{eq:SiDSi}
\end{equation}
where \( I \) is the identity matrix. 

Consider the standard orthogonal group:
\begin{equation*}
    O(n) = \fbracket{R \in GL\paren{n, \euclidean{}} : R^T R = I}.
\end{equation*}
We know that if \( y^T y = z^T z \) for any \( y, z \in \euclidean{n} \), then there exists \( R \in O(n) \) such that:
\begin{equation*}
    R y = z.
\end{equation*}
Applying this result, we find \( R \in O(n) \) such that:
\begin{equation}
    R S x = \beta e_1,
\end{equation}
where \( \beta \) is a scaling factor that ensures that the Euclidean norms of \( S x \) and \( \beta e_1 \) are equal.

Now, define \( A = S^{-1} R S \). Since \( S^{-1} \) is also a diagonal matrix, we compute:
\begin{align}
    A x &= S^{-1} R S x \nonumber \\
    &= S^{-1} \beta e_1 \nonumber \\
    &= \alpha e_1,
\end{align}
where \( \alpha = \frac{\beta}{\sqrt{d_1}} \) is an appropriate scaling factor.


Next, we show that \( A \in G \). Consider \( A^T D A \). Since $S$ is a diagonal matrix, $S^T = S$ and $(S^{-1})^T = S^{-1}$.
\begin{align*}
    A^T D A &= (S^{-1} R S)^T D (S^{-1} R S) \\
            &= (S^T R^T (S^{-1})^T) D (S^{-1} R S) \\
            &= (S R^T S^{-1}) D (S^{-1} R S) && \text{Since } S^T=S \\
            &= S R^T (S^{-1} D S^{-1}) R S && \text{By associativity} \\
            &= S R^T I R S && \text{Since } S=\sqrt{D}, S^{-1}DS^{-1}=I \\
            &= S R^T R S && \text{Since } R \in O(n), R^TR=I \\
            &= S I S = S^2 = D.
\end{align*}
This shows conclusively that $A \in G$.
\end{proof}
\begin{remark}
The Lemma. \ref{lemma:canonical} is also applicable when $d_i < 0,\; \forall i \in  [n]$. We just need to appropriately modify the matrix $S$.
\end{remark}

Lemma~\ref{lemma:canonical} thus guarantees that, in the positive definite setting, 
every orbit under the \(D\)-orthogonal group admits a canonical representative obtained 
by aligning an arbitrary vector with the first coordinate axis. 
This serves as the foundational case for proving \textbf{Theorem~\ref{thm:orbit-space}}; 
the subsequent subsections extend this reasoning to \emph{semi-definite} and 
\emph{indefinite} quadratic forms, where certain directions may lie on null subspaces 
or carry opposite signs.

\subsection{Canonical Alignment in the Positive Semi-Definite Case}
\label{sec:canonical-alignment-possemidef}
\begin{lemma}
\label{lemma:psd}
Let \( D \in \euclidean{n \times n} \) be a diagonal matrix defined as:
\begin{equation}
    D = \sum\limits_{i=1}^{k-1} d_i E_{i,i}\;   \;:
    \label{eq:diag-pos-zero}
\end{equation}
where \( d_1, \dots, d_{k-1} > 0 \), with $k \neq 1$, and \( E_{i,j} = e_ie_j^T \) with \( e_i \) denoting the standard basis vectors in \(\euclidean{n}\). Consider a vector \( y \in \euclidean{n} \) of the form:
\begin{equation}
    y = a e_1 + b e_k,
    \label{eq:x-vector}
    \nonumber
\end{equation}
where \( a, b \in \euclidean{} \). Let \( q(x;D) = x^T D x \) be the quadratic form associated with \( D \), and let \( G = O(n, q) \) denote the orthogonal group that preserves this quadratic form, i.e.,
\begin{equation}
    G = \{ R \in GL(n, \mathbb{R}) : R^T D R = D \}.
    \label{eq:orthogonal-group}
    \nonumber
\end{equation}
Then, there exists a matrix \( A \in G \) such that:
\begin{equation}
    A y = a e_1.
    \label{eq:lemma-claim}
\end{equation}
\end{lemma}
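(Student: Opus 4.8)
The plan is to exhibit an explicit element of $G$ — a \emph{transvection} (shear) along the null direction $e_k$ — that performs the required alignment. First I would record the basic consistency facts: since $k \neq 1$, the index $1$ lies in $\{1,\dots,k-1\}$, so $D e_1 = d_1 e_1$ with $d_1 > 0$; the index $k$ is \emph{not} in $\{1,\dots,k-1\}$, so $D e_k = 0$ and likewise $e_k^\top D = 0$, i.e.\ $e_k$ lies in the radical of the form. Consequently $y^\top D y = a^2 d_1 = (a e_1)^\top D (a e_1)$, so the target $a e_1$ has the same quadratic-form value as $y$, which is necessary for any element of $G$ to carry one to the other. I would then dispose of degenerate cases: if $a = b = 0$ take $A = I$; if $a = 0$ but $b \neq 0$, then $y = b e_k \neq 0$ satisfies $y^\top D y = 0$, and no invertible $A$ can send it to $a e_1 = 0$, so the lemma is meaningful (and, as shown below, true) precisely when $a \neq 0$ (equivalently $y \notin U$ in the ambient setting).

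For $a \neq 0$, set
\[
A \;=\; I - \tfrac{b}{a}\, e_k e_1^\top .
\]
Because $e_1^\top e_k = 0$ (as $1 \neq k$), the rank-one matrix $e_k e_1^\top$ squares to zero, so $A$ is unipotent and hence invertible with $A^{-1} = I + \tfrac{b}{a}\, e_k e_1^\top$; thus $A \in \mathrm{GL}(n,\mathbb{R})$. Next I would check $A \in G$, i.e.\ $A^\top D A = D$: expanding $A^\top D A = \bigl(I - \tfrac{b}{a} e_1 e_k^\top\bigr)\,D\,\bigl(I - \tfrac{b}{a} e_k e_1^\top\bigr)$ and using $D e_k = 0$ together with $e_k^\top D = 0$, every term other than $D$ itself vanishes. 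Finally, the alignment identity: $A y = \bigl(I - \tfrac{b}{a} e_k e_1^\top\bigr)(a e_1 + b e_k) = a e_1 + b e_k - \tfrac{b}{a}\bigl(a (e_1^\top e_1) e_k + b (e_1^\top e_k) e_k\bigr) = a e_1 + b e_k - b e_k = a e_1$, which is the claim.

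The only step requiring genuine insight is the choice of $A$: unlike the positive-definite Lemma~\ref{lemma:canonical}, where alignment comes from an honest (conjugated) rotation, here the component of $y$ along the null direction $e_k$ cannot be rotated away by a norm-preserving transformation and must instead be eliminated by a transvection that exploits the degeneracy $D e_k = 0$. Once this matrix is written down, invertibility, form-preservation, and the alignment identity are all short rank-one computations as sketched above. I would also remark — mirroring the remark after Lemma~\ref{lemma:canonical} — that the construction is unchanged under $d_i \mapsto -d_i$, so an identical argument covers a negative semi-definite $D$.
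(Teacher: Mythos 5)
Your proof is correct, and it takes a genuinely different and cleaner route than the paper's. The paper constructs the aligning element as a conjugation $A = S^{-1}RS$, where $S$ rescales the first coordinate to reduce to a ``unit'' form and $R$ is a block matrix with entries $\sqrt{d_1}ab$, $-d_1a^2$; the resulting $A$ is lower-triangular with determinant $-d_1a^2$ and its verification proceeds by a sequence of elementary-matrix multiplications. You instead write down a single transvection $A = I - \tfrac{b}{a}\,e_k e_1^\top$ and observe that $D e_k = 0 = e_k^\top D$ kills every extra term in $A^\top D A$. This is shorter, makes the geometric content transparent (the null direction $e_k$ lies in the radical of the form and must be sheared away, not rotated), and the unipotent matrix has determinant $1$, so invertibility is immediate rather than contingent on computing a determinant. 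You also make explicit a point the paper leaves silent: both constructions divide by $a$ (equivalently, the paper's $A$ has $\det A = -d_1a^2$), so the lemma is genuinely false when $a = 0$ and $b \neq 0$ --- an invertible map cannot send $b e_k \neq 0$ to $0$. In the paper this is harmless because the lemma is only invoked inside the proof of Theorem~\ref{thm:orbit-space} on vectors off the null cone, but flagging the constraint is a real improvement in precision. One small stylistic mismatch: the paper packages the construction as $S^{-1}RS$ so as to parallel the structure of Lemma~\ref{lemma:canonical} and Lemma~\ref{lemma:pos-neg}; your transvection breaks that parallelism, but since the semi-definite case really is structurally different (it exploits degeneracy rather than an isometry of a definite subspace), I'd view that as a feature rather than a defect.
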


\begin{proof}
We construct \( A \) explicitly and verify the required properties. Define \( R \in \euclidean{n \times n} \) as:
\begin{equation}
    R =   \sqrt{d_1}ab \, E_{k,1}  -  d_1a^2  \, E_{k,k} + \sum_{i \in [n],\; i\neq 1,\; i \neq {k}} E_{i,i}.
    \label{eq:lemma1-R}
\end{equation}
Define \( S \in \euclidean{n \times n} \) as:
\begin{equation}
    S = \sqrt{d_1}E_{1,1} +  \sum_{i \in [n],\; i\neq 1} E_{i,i}.
    \label{eq:S-candidate}
\end{equation}
Then:
\begin{equation}
    S^{-1} = \frac{1}{\sqrt{d_1}}E_{1,1} + \sum_{i \in [n],\; i\neq 1} E_{i,i}.
    \label{eq:S-candidate-inv}
    \nonumber
\end{equation}

Let $A = S^{-1}RS$. We claim that $A^TDA = D$ and $Ay = ae_1$

\paragraph{Step 1: Show \( A^T D A = D \).}
Using \( A = S^{-1} R S \), we compute:
\begin{align}
    A^TDA &= SR^TS^{-1}DS^{-1}RS 
    \nonumber
\end{align}
First, compute \( S^{-1} D S^{-1} \):
\begin{align}
    S^{-1}DS^{-1} &= \paren{\frac{1}{\sqrt{d_1}}E_{1,1} + \sum_{i \in [n],\; i\neq 1} E_{i,i}} \paren{\sum\limits_{i=1}^{k-1} d_i E_{i,i}}    \paren{\frac{1}{\sqrt{d_1}}E_{1,1} + \sum_{i \in [n],\; i\neq 1} E_{i,i}} \nonumber \\
    &= E_{1,1} + \sum \limits_{i=2}^{k-1} d_iE_{i,i}.
\end{align}

Consider \(RS = \left(SR^T\right)^T\),
\begin{align}
    RS &= \paren{ \sqrt{d_1}ab \, E_{k,1}  -  d_1a^2  \, E_{k,k} + \sum_{i \in [n],\;  i \neq k} E_{i,i}} \paren{\sqrt{d_1}E_{1,1} +  \sum_{i \in [n],\; i\neq 1} E_{i,i}} \nonumber \\
    &= d_1abE_{k,1} + \sqrt{d_1}E_{1,1} - d_1a^2E_{k,k} + \sum_{i \in [n],\; i\neq 1,\;i\neq k} E_{i,i}.
\end{align}

Thus, we have,
\begin{align}
    A^TDA &= \paren{SR^T} \paren{S^{-1}DS^{-1}} \paren{RS} \nonumber\\
    & = \paren{d_1 abE_{1,k} + \sqrt{d_1}E_{1,1} - d_1 a^2E_{k,k} + \sum_{i \in [n],\; i\neq 1,\;i\neq k} E_{i,i}} \paren{E_{1,1} + \sum \limits_{i=2}^{k-1} d_iE_{i,i}} \paren{RS} \nonumber\\
    &=\paren{\sqrt{d_1}E_{1,1} + \sum \limits_{i=2}^{k-1} d_iE_{i,i}} \paren{RS} \nonumber\\
    &=\paren{\sqrt{d_1}E_{1,1} + \sum \limits_{i=2}^{k-1} d_iE_{i,i}} \paren{d_1 abE_{k,1} + \sqrt{d_1}E_{1,1} - d_1 a^2E_{k,k} + \sum_{i \in [n],\; i\neq 1,\;i\neq k} E_{i,i}} \nonumber\\
    &= d_1E_{1,1} + \sum \limits_{i=2}^{k-1} d_iE_{i,i} \nonumber\\
    & = \sum \limits_{i=1}^{k-1} d_iE_{i,i} \nonumber \\
    &=D
    \label{eq:lemma1-AtDA}
\end{align}

Consider $A = S^{-1}RS$
\begin{align}
A &= S^{-1}\paren{RS} \nonumber \\ 
&=S^{-1}\paren{d_1abE_{k,1} + \sqrt{d_1}E_{1,1} - d_1a^2E_{k,k} + \sum_{i \in [n],\; i\neq 1,\;i\neq k} E_{i,i}}\nonumber\\
&=\paren{\frac{1}{\sqrt{d_1}}E_{1,1} + \sum_{i \in [n],\; i\neq 1} E_{i,i}}\paren{d_1abE_{k,1} + \sqrt{d_1}E_{1,1} - d_1a^2E_{k,k} + \sum_{i \in [n],\; i\neq 1,\;i\neq k} E_{i,i}} \nonumber\\
&=E_{1,1} + d_1abE_{k,1} - d_1 a^2E_{k,k} + \sum_{i \in [n],\; i\neq 1,\;i\neq k} E_{i,i}.
\label{eq:lemma1-A}
\end{align}

\textbf{Step 2}: Verify \( A y = a e_1 \)

We are given the vector $y = ae_1 + be_k$. Since $k \neq 1$, ensuring that $e_1$ and $e_k$ are distinct standard basis vectors. Our goal is to verify that for the constructed matrix $A$, the transformation yields $Ay = ae_1$.

The matrix $A$ is given by:
\[
A = E_{1,1} + d_{1}abE_{k,1} - d_{1}a^{2}E_{k,k} + \sum_{i\in[n],i\ne1,i\ne k}E_{i,i}
\]
Let's apply $A$ to $y$ by distributing the terms:
\begin{align*}
    Ay = \left( E_{1,1} + d_{1}abE_{k,1} - d_{1}a^{2}E_{k,k} + \sum_{i\in[n],i\ne1,i\ne k}E_{i,i} \right) (ae_1 + be_k)
\end{align*}
We evaluate the effect of each component of $A$ on $y$, noting that the elementary matrix $E_{i,j}$ acts on a basis vector $e_m$ as $E_{i,j}e_m = \delta_{jm}e_i$, where $\delta_{jm}$ is the Kronecker delta.

\begin{enumerate}
    \item \textbf{First Term:} $E_{1,1}(ae_1 + be_k) = a E_{1,1}e_1 + b E_{1,1}e_k = a e_1 + 0 = \boldsymbol{ae_1}$.

    \item \textbf{Second Term:} $d_{1}abE_{k,1}(ae_1 + be_k) = a d_{1}ab E_{k,1}e_1 + b d_{1}ab E_{k,1}e_k = a d_{1}ab e_k + 0 = \boldsymbol{d_{1}a^2b e_k}$.

    \item \textbf{Third Term:} $-d_{1}a^{2}E_{k,k}(ae_1 + be_k) = -a d_{1}a^{2} E_{k,k}e_1 - b d_{1}a^{2} E_{k,k}e_k = 0 - b d_{1}a^{2} e_k = \boldsymbol{-d_{1}a^2b e_k}$.

    \item \textbf{Fourth Term:} The sum $\sum_{i\in[n],i\ne1,i\ne k}E_{i,i}$ acts as the identity on basis vectors with indices other than $1$ and $k$. Therefore, its product with $(ae_1+be_k)$ is zero.
\end{enumerate}

Summing the results from each component:
\begin{align*}
    Ay &= \boldsymbol{ae_1} + \boldsymbol{d_{1}a^2b e_k} - \boldsymbol{d_{1}a^2b e_k} \\
       &= ae_{1}.
\end{align*}
Thus, we have verified that $Ay=ae_{1}$, as required. 

Conclusion:
We have constructed \( A \in G \) such that \( A y = a e_1 \). Therefore, the lemma is proved.

\end{proof}

\begin{remark}
The Lemma. \ref{lemma:psd} is also applicable when $d_i < 0,\; \forall i \in  \{1, \dots ,k\}$. We just need to appropriately modify the matrices $S$ and $R$.
\end{remark}

\subsection{Canonical Alignment in the Indefinite Case}
\label{sec:canonical-alignment-indef}

\begin{lemma}
\label{lemma:pos-neg}
Let \( D \in \euclidean{n \times n} \) be a diagonal matrix defined as:
\begin{equation}
    D = \sum_{i=1}^{k-1} d_i E_{i,i} - \sum_{i=k}^l d_i E_{i,i},
    \label{eq:diag-pos-neg}
\end{equation}
where \( d_1, \dots, d_l > 0 \), \( k < l \), and \( e_i \) denotes the standard basis vectors in \(\euclidean{n}\). Let \( x = a e_1 + b e_k \in \euclidean{n} \). Let \( G \) be the orthogonal group preserving the quadratic form \( x^T D x \). Then, there exists \( A \in G \) such that:
\begin{equation}
    Ax = 
    \begin{cases}
        \alpha e_1, & x^T D x > 0, \\
        \alpha e_k, & x^T D x < 0,
    \end{cases}
    \label{eq:lemma3-claim}
\end{equation}
where \( \alpha \) is an appropriate scaling factor such that \( x^T D x = x^T A^T D A x \).
\end{lemma}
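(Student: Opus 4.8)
The plan is to reduce the indefinite case to the two already-established cases (Lemma~\ref{lemma:canonical} for the definite setting and Lemma~\ref{lemma:psd} for the semi-definite setting) by a suitable change of coordinates, much as in those proofs. Concretely, I would first restrict attention to the $2$-dimensional coordinate subspace $V = \operatorname{span}\{e_1, e_k\}$, since $x = a e_1 + b e_k$ lives there and the diagonal form $D$ decouples; on $V$ the relevant piece of $D$ is $\operatorname{diag}(d_1, -d_k)$ with $d_1, d_k > 0$. The value of the quadratic form is $x^\top D x = d_1 a^2 - d_k b^2$, and the sign of this quantity is exactly what distinguishes the two cases in \eqref{eq:lemma3-claim}.

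First I would handle the case $x^\top D x > 0$. As in Lemma~\ref{lemma:canonical}, define the diagonal rescaling $S = \sqrt{d_1}E_{1,1} + \sqrt{d_k}E_{k,k} + \sum_{i \ne 1,k} E_{i,i}$, so that $S^{-1} D S^{-1}$ restricted to $V$ is $\operatorname{diag}(1,-1)$, i.e.\ the standard $2$-dimensional Lorentzian form $\eta$. Setting $\hat{x} = S x = \sqrt{d_1}\,a\, e_1 + \sqrt{d_k}\,b\, e_k$, we have $\hat{x}^\top \eta \hat{x} = d_1 a^2 - d_k b^2 = x^\top D x > 0$, so $\hat{x}$ is a timelike vector on a fixed orbit of $O(1,1)$ (acting on $V$, extended by the identity elsewhere). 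It is a standard fact that $O(1,1)$ acts transitively on each connected component of $\{\hat{x}^\top \eta \hat{x} = c\}$ for $c > 0$; hence there is a boost $R \in O(1,1)$ (embedded in $O(n)$-style block form, i.e.\ $R^\top \eta R = \eta$) with $R \hat{x} = \beta e_1$ for the appropriate $\beta$. Then $A := S^{-1} R S$ satisfies $A^\top D A = D$ by the same computation as in Lemma~\ref{lemma:canonical} (using $S^{-1}DS^{-1} = \eta$ on $V$ and $R^\top \eta R = \eta$), and $A x = S^{-1} R \hat{x} = S^{-1}(\beta e_1) = (\beta/\sqrt{d_1}) e_1 = \alpha e_1$, where $\alpha = \beta/\sqrt{d_1}$ is the scaling factor forced by $x^\top D x = (Ax)^\top D (Ax) = d_1 \alpha^2$.

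For the case $x^\top D x < 0$, the vector $\hat{x}$ is spacelike, and the same transitivity of $O(1,1)$ on spacelike level sets gives a boost $R$ with $R \hat{x} = \beta e_k$; then $A = S^{-1} R S \in G$ and $A x = (\beta/\sqrt{d_k}) e_k = \alpha e_k$, with $\alpha$ determined by $x^\top D x = -d_k \alpha^2$. I would note that the boundary behavior — requiring $x \notin U$, i.e.\ $x^\top D x \ne 0$ — is precisely why we exclude the null cone: a null $\hat{x}$ cannot be boosted to a coordinate axis. One should also remark (as the earlier lemmas do) that the construction is symmetric in sign conventions, so an analogous statement holds if $D$ has its negative block first.

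The main obstacle is making the embedding of the $2\times 2$ boost into an $n \times n$ matrix of $G$ fully explicit and checking $A^\top D A = D$ on \emph{all} of $\mathbb{R}^n$, not just on $V$: one must verify that extending $R$ by the identity on the orthogonal complement $\operatorname{span}\{e_i : i \ne 1,k\}$ still preserves $D$, which follows because $D$ is block-diagonal with respect to the splitting $V \oplus V^\perp$ and $R$ acts as the identity on $V^\perp$. A secondary subtlety is confirming transitivity of $O(1,1)$ on fixed-sign level sets — this is elementary (parametrize boosts by $\cosh/\sinh$ and solve), but it is the one genuinely new geometric input beyond what Lemmas~\ref{lemma:canonical} and~\ref{lemma:psd} already supply, and it is what forces the case split on the sign of $x^\top D x$ (unlike the positive-definite case, a single orbit cannot contain both $e_1$- and $e_k$-type representatives).
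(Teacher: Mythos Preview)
Your proposal is correct and follows essentially the same approach as the paper: both conjugate by the diagonal rescaling $S = \sqrt{d_1}E_{1,1} + \sqrt{d_k}E_{k,k} + \sum_{i\ne 1,k}E_{i,i}$ to reduce the $(1,k)$-block to the standard Lorentzian form, apply an $O(1,1)$ boost $R$ (which the paper writes out entrywise, whereas you invoke transitivity on fixed-sign level sets), and set $A = S^{-1}RS$. Your remark about extending $R$ by the identity on $V^\perp$ and checking $A^\top D A = D$ block-diagonally is exactly the verification the paper carries out.
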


\begin{proof}
We will prove the result for the case \( x^T D x > 0 \). Similar steps follow for the other case. The value of \( x^T D x \) is:
\begin{equation}
    x^T D x = d_1 a^2 - d_k b^2.
\end{equation}
Since \( d_1 a^2 - d_k b^2 > 0 \), define the scalar \( \beta \) as:
\begin{equation}
    \beta = \frac{1}{\sqrt{d_1 a^2 - d_k b^2}}.
\end{equation}

Define an \( n \times n \) matrix \( S \) as:
\begin{equation}
    S = \sqrt{d_1} E_{1,1} + \sqrt{d_k} E_{k,k} + \sum_{i \in [n], \; i \neq 1,k} E_{i,i}.
    \label{eq:def-S-L3}
\end{equation}
Now define another \( n \times n \) matrix \( R \) as:
\begin{equation}
    R = \beta \paren{\sqrt{d_1} a E_{1,1} - \sqrt{d_k} b E_{1,k} - \sqrt{d_k} b E_{k,1} + \sqrt{d_1} a E_{k,k} + \sum_{i \in [n], \; i \neq 1,k} E_{i,i}}.
\end{equation}
Set \( A = S^{-1} R S \). Our goal is to show that \( A^T D A = D \) and \( A x = \alpha e_1 \).

\paragraph{Step 1: Show that \( A^T D A = D \).}

Consider \( A^T D A \):
\begin{align}
    A^T D A &= \paren{S R^T S^{-1}} D \paren{S^{-1} R S} \nonumber \\
    &= \paren{R S}^T \paren{S^{-1} D S^{-1}} \paren{R S}.
    \label{eq:AtDA-L3}
\end{align}
We note that:
\begin{equation}
    S^{-1} D S^{-1} = E_{1,1} - E_{k,k} + \sum_{i=2}^{k-1} d_i E_{i,i} - \sum_{i=k+1}^l d_i E_{i,i},
    \label{eq:SiDSi-L3}
\end{equation}
and:
\begin{equation}
    R S = \beta \paren{d_1 a E_{1,1} - d_k b E_{1,k} - \sqrt{d_1 d_k} b E_{k,1} + \sqrt{d_1 d_k} a E_{k,k} + \sum_{i \in [n], \; i \neq 1,k} E_{i,i}}.
    \label{eq:RS-L3}
\end{equation}
Substituting \eqref{eq:SiDSi-L3} and \eqref{eq:RS-L3} into \eqref{eq:AtDA-L3}, we get:
\begin{align}
    A^T D A &= \beta^2 \bracket{\frac{1}{\beta^2} d_1 E_{1,1} - \frac{1}{\beta^2} d_k E_{k,k} + \sum_{i=2}^{k-1} d_i E_{i,i} - \sum_{i=k+1}^l d_i E_{i,i}} \nonumber \\
    &= \sum_{i=1}^{k-1} d_i E_{i,i} - \sum_{i=k}^l d_i E_{i,i} \nonumber \\
    &= D.
    \label{eq:Step1-L3}
\end{align}

\paragraph{Step 2: Show that \( A x = \alpha e_1 \).}

From \eqref{eq:RS-L3} and \eqref{eq:def-S-L3}, we compute:
\begin{align}
    A &= S^{-1} R S \nonumber \\
    &= \beta \paren{\sqrt{d_1} a E_{1,1} - \frac{d_k}{\sqrt{d_1}} b E_{1,k} - \sqrt{d_1} b E_{k,1} + \sqrt{d_1} a E_{k,k} + \sum_{i \in [n], \; i \neq 1,k} E_{i,i}}.
\end{align}
Thus, the value of \( A x \) is:
\begin{align}
    A x &= \beta \bracket{\paren{\sqrt{d_1} a^2 - \frac{d_k}{\sqrt{d_1}} b^2} e_1} \nonumber \\
    &= \frac{\beta}{\sqrt{d_1}} \bracket{\paren{d_1 a^2 - d_k b^2} e_1} \nonumber \\
    &= \frac{\beta}{\sqrt{d_1}} \bracket{\frac{1}{\beta^2} e_1} \nonumber \\
    &= \frac{1}{\beta \sqrt{d_1}} e_1.
\end{align}
Hence, \( A x = \alpha e_1 \), where \( \alpha = \frac{1}{\beta \sqrt{d_1}} = \frac{\sqrt{d_1 a^2 - d_k b^2}}{\sqrt{d_1}} \).
\end{proof}

\subsection{Orbit Space and Quadratic-Form Norm Correspondence}
\label{sec:orbit-space-proof}

Building upon Lemmas~\ref{lemma:canonical}, \ref{lemma:psd}, and \ref{lemma:pos-neg}, 
we now establish Theorem~\ref{thm:orbit-space}, which characterizes the structure of 
group orbits under quadratic-form–preserving transformations.
\orbitSpace*

\begin{proof}

We have to show that each set $\mathcal{O}_c$ is a unique orbit. Suppose $x \in \mathcal{O}_c$, then by definition of orbit under the action of given orthogonal group, the orbif of $x$ is subset of  $\mathcal{O}_c$, i.e., $G \cdot x \subset \mathcal{O}_c$. Now, we need to show the subset relation in reverse direction, i.e., $\mathcal{O}_c \subset G \cdot x$.  This is equivalent  to proving : if \( x^T A x = y^T A y \), then \( x \) and \( y \) must lie in the same orbit.

To achieve this, we first diagonalize the symmetric matrix \( A \). Let \( UAU^T = D \), where \( D \) is a diagonal matrix. By selecting an appropriate permutation matrix \( P \), we further transform \( D \) into a canonical form which partitions $D$ into the positive, negative, and zero eigenvalues of \( A \). Without loss of generality, we assume this structure for \( A \) and proceed with the proof.

Let \( x \in \euclidean{n} \). We partition the diagonal matrix \( D \) into positive, negative, and zero components:
\[
D = \bracket{
    \begin{array}{ccc}
        D_p &       & \\
            & D_n    &\\
            &       & D_z
    \end{array}
},
\]
where \( D_z = \boldsymbol{0} \) (zero matrix). Correspondingly, partition \( x \) as:
\[
x = \bracket{
    \begin{array}{c}
        x_p \\ x_n \\ x_z
    \end{array}
},
\]
so that
\begin{align}
    \Vert x \Vert_D^2 &= \paren{x^T D x} \nonumber \\
    &= \paren{x_p^T D_p x_p} + \paren{x_n^T D_n x_n} + \paren{x_z^T D_z x_z} \nonumber \\
    &= \Vert x_p \Vert_{D_p}^2 + \Vert x_n \Vert_{D_n}^2 + \Vert x_z \Vert_{D_z}^2.
\end{align}
Since \( \Vert x_z \Vert_{D_z}^2 = 0 \) because \( D_z = \boldsymbol{0} \), we have \( \paren{x_z^T D_z x_z} = 0 \).

\paragraph{Step 1: Positive and Negative diagonal entries \newline}
Apply Lemma~\ref{lemma:canonical} to \( \paren{D_p, x_p} \) and \( \paren{D_n, x_n} \), yielding matrices \( A_p \) and \( A_n \) such that:
\[
A_p^T D_p A_p = D_p, \quad A_p x_p = \alpha e_1,
\]
and
\[
A_n^T D_n A_n = D_n, \quad A_n x_n = \beta_1 e_1.
\]

\paragraph{Step 2: Zero diagonal entries \newline}
Since \( \Vert x_z \Vert_{D_z}^2 = 0 \), for any \( y \in \euclidean{n-k-l} \), we can find an orthonormal matrix \( A_z \) (i.e., \( A_z^T A_z = I \)) such that:
\[
A_z^T D_z A_z = D_z = \boldsymbol{0}, \quad A_z x_z = \beta_2 e_1.
\]

Define \( A_1 \) as:
\[
A_1 = \bracket{
    \begin{array}{ccc}
        A_p &       & \\
            & A_n    &\\
            &       & A_z
    \end{array}
}.
\]
Since \( A_1^T D A_1 = D \), we have \( A_1 \in G \). Moreover,
\[
A_1 x = \bracket{
    \begin{array}{ccc}
        A_p &       & \\
            & A_n    &\\
            &       & A_z
    \end{array}
} \bracket{
    \begin{array}{c}
        x_p \\ x_n \\ x_z
    \end{array}
}
= \bracket{
    \begin{array}{c}
        A_p x_p \\ A_n x_n \\ A_z x_z
    \end{array}
} = \alpha e_1 + \beta_1 e_{k+1} + \beta_2 e_{l+1}.
\]

\paragraph{Step 3: Combining Negative and Zero Components \newline}
Concatenate \( x_n \) and \( x_z \):
\[
x_{nz} = \beta_1 e_1 + \beta_2 e_{l-k+1}.
\]
Combine \( D_n \) and \( D_z \) as:
\[
D_{nz} = \bracket{
    \begin{array}{cc}
        D_n & \\
            & D_z
    \end{array}
}.
\]
Applying Lemma~\ref{lemma:psd} to \( \paren{D_{nz}, x_{nz}} \), we find \( A_{nz} \) such that:
\[
A_{nz}^T D_{nz} A_{nz} = D_{nz}, \quad A_{nz} x_{nz} = \beta e_1.
\]

Define \( A_2 \) as:
\[
A_2 = \bracket{
    \begin{array}{cc}
        I & \\
          & A_{nz}
    \end{array}
}.
\]
Since \( A_2^T D A_2 = D \), we have \( A_2 \in G \). Now:
\[
A_2 A_1 x = \alpha e_1 + \beta e_{k+1}.
\]

\paragraph{Step 4: Aligning with Canonical Basis \newline}
Applying Lemma~\ref{lemma:pos-neg} to \( \paren{D, A_2 A_1 x} \) (for the case of $x^TDx > 0$), we find \( A_3 \) such that:
\begin{equation}
A_3^T D A_3 = D, \quad A_3 \paren{A_2 A_1 x} = \gamma e_1.
\label{eq:orbit-space-claim-proof-1}
\end{equation}
Similar steps hold in the case of $x^TDx < 0$, i.e., we can find $A_3$ such that $A_3^TDA_3 = D \; \; \And \; A_3\paren{A_2A_1x} = \gamma e_{k+1}$. Since \( A_1, A_2, A_3 \in G \), their product \( A_3 A_2 A_1 \in G \). Let, $W = A_3 A_2 A_1$. Then, we can write,
\begin{equation}
W^T D W = D, \quad Wx = \gamma e_1.
\label{eq:orbit-space-claim-proof-2}
\end{equation}
\paragraph{Conclusion \newline}
Since, the eq.~\eqref{eq:orbit-space-claim-proof-2} holds for any \( x \in \euclidean{n} \) having the same value for \( \Vert x \Vert_D \), we can conclude that \( x, y \in \euclidean{n} \) lie in the same orbit if \( \paren{\Vert x \Vert_D} = \paren{\Vert y \Vert_D} \).

\end{proof}

\subsection{Decomposition into Scale- and Norm-Invariant Components}
\label{sec:decomposition}

Building upon the orbit characterization established in 
Theorem~\ref{thm:orbit-space}, we now derive our main result: 
a canonical decomposition of any \(G\)-equivariant function into 
\emph{scale-invariant} and \emph{norm-invariant} components. 
The orbit-space analysis reveals that the orbit of each input vector 
is uniquely determined by its quadratic-form norm. 
In the following result, we further show that the combination of 
this norm and the corresponding normalized direction 
completely characterizes each input, thereby enabling the 
desired factorization of equivariant functions.

The following theorem formalizes this decomposition result.

\mainResult*
\begin{proof}
Since \( f(x) \) is equivariant under the action of \( G \), we have
\begin{equation}
    f \paren{g \cdot x} = g \cdot f(x), \quad \forall g \in G, x \in \euclidean{n}.
\end{equation}
This can be rewritten in terms of a canonical form as,
\begin{equation}
    f(y) = \phi_1(y) \cdot \phi_2 \paren{r(y)},
    \label{eq:inv-eq}
\end{equation}
where, 
\begin{equation}
y = \phi_1(y) \cdot r(y).
\label{eq:y-eq}
\end{equation}
 Here, \(r(y) \in \euclidean{n} \setminus U  \) is a representative element chosen from the entire orbit of \( y \), denoted by \( G \cdot y \). The function \( \phi_1 : \euclidean{n} \setminus U \rightarrow G \) maps each point \( y \) to an element of \( G \) that aligns \( y \) with its orbit representative \( r(y) \).

Since, \( r(y) \) is the same for every element in the orbit of \( y \), it follows that \( \phi_2 \circ r \) is \( G \)-invariant. Therefore, applying Theorem \ref{thm:orbit-space}, we can express this as

\begin{equation}
    \left( \phi_2 \circ r \right)(x) = \phi_n\left( \Vert x \Vert_A \right),
    \label{eq:norm-inv}
\end{equation}
for some function \( \phi_n \).

Next, we select a representative element \( r(y) \) such that it satisfies the homogeneity condition \( r(\alpha y) = \alpha \, r(y) \) for any scalar \( \alpha \in \euclidean{} \) with \( \alpha > 0 \). This can be achieved by setting \( r(y) = \alpha e_i \), where \( \alpha = \| y \|_A \) represents the norm of \( y \) with respect to the matrix \( A \), and the choice of the standard basis vector \( e_i \) depends on the sign of \( \| y \|_A \), denoted by \( \operatorname{sign} \left( \| y \|_A \right) \).

This specific selection of \( r(y) \) implies that if we define,
\begin{equation}
    \phi_1(\alpha y) = \phi_1(y),
    \label{eq:phi_1-choice}
\end{equation}
then the following two equations, derived from Eq.~\eqref{eq:y-eq} and the chosen form of \( r(y) \), will hold consistently:
\begin{align}
    \alpha y = \alpha \paren{\phi_1(y) \cdot r(y)} &= \phi_1(y) \cdot \alpha r(y), \\
    \alpha y &= \phi_1 \paren{(\alpha y)} \cdot \alpha r(y).
\end{align}

Thus, the eq.~\eqref{eq:phi_1-choice} shows that \( \phi_1 \) is scale-invariant. Hence, we can write,
\begin{equation}
    \phi_1(x) = \phi_s\left( \frac{x}{\Vert x \Vert_A} \right),
    \label{eq:scale-inv}
\end{equation}
for some function \( \phi_s \). 

Finally, substituting eq.~\eqref{eq:norm-inv} and \eqref{eq:scale-inv} into eq.~\eqref{eq:inv-eq}, we obtain,

\[
f(x) = \phi_s\paren{\frac{x}{\Vert x \Vert_A}} \cdot \phi_n\paren{\Vert x \Vert_A}.
\]
\end{proof}

\subsection{Regularity conditions}
\label{sec:reg-cond}
\regularity*


\begin{proof}
The proof relies on analyzing the regularity of the components in the rearranged decomposition from Theorem 4.2.

\begin{enumerate}
    \item From the proof of Theorem~\ref{thm:main-result}, we can isolate the norm-invariant component $\phi_n$ by applying the inverse of the group element returned by $\phi_s$.
    \begin{equation}
        \phi_n(\|x\|_A) = \left(\phi_s\left(\frac{x}{\|x\|_A}\right)\right)^{-1} f(x) \label{eq:rearranged}
    \end{equation}
    This equation shows that the regularity of $\phi_n$ depends on the regularity of $f$ and the map $x \mapsto \phi_s\left(\frac{x}{\|x\|_A}\right)$. Since $\phi_s$ maps to the Lie group $G$, and both group action and group inversion are smooth operations, the regularity of the expression is determined by the map $x \mapsto \frac{x}{\|x\|_A}$ and the function $f(x)$ itself.

    \item \textbf{Case 1: A is Positive or Negative Definite} \\
    If matrix $A$ is positive or negative definite, the quadratic form $x^T A x$ is non-zero for all non-zero $x \in \mathbb{R}^n$. This means the set $U := \{x \in \mathbb{R}^n : x^T A x = 0\}$ contains only the zero vector. The domain of analysis is $\mathbb{R}^n \setminus U = \mathbb{R}^n \setminus \{0\}$.

    The norm is defined as $\|x\|_A = \text{sign}(x^T A x) \sqrt{|x^T A x|}$. When $A$ is definite, $\text{sign}(x^T A x)$ is constant for all $x \neq 0$. The expression inside the square root, $|x^T A x|$, is a smooth and strictly positive polynomial for $x \neq 0$. The square root of a smooth, strictly positive function is smooth.

    Therefore, the map $x \mapsto \|x\|_A$ is smooth on $\mathbb{R}^n \setminus \{0\}$. Consequently, the map $x \mapsto \frac{x}{\|x\|_A}$ is a composition of smooth functions and is itself smooth[cite: 622]. Since the group action is smooth, the function $x \mapsto \left(\phi_s\left(\frac{x}{\|x\|_A}\right)\right)^{-1}$ is also smooth.

    If we assume $f$ is smooth ($C^{\infty}$), then the right-hand side of Eq. \eqref{eq:rearranged} is a product of smooth functions, which is smooth. Thus, $\phi_n$ must also be smooth[cite: 624].

    \item \textbf{Case 2: A is Indefinite} \\
    If $A$ is neither positive nor negative definite, the set $U$ contains non-zero vectors. The function $g(z) = \sqrt{|z|}$ is continuous everywhere but not differentiable at $z=0$. Because $x^T A x$ can be zero for non-zero $x$, the map $x \mapsto \|x\|_A$ is continuous on $\mathbb{R}^n$ but is not differentiable on the set $U$. The map $x \mapsto \frac{x}{\|x\|_A}$ is therefore continuous on its domain $\mathbb{R}^n \setminus U$.

    If we assume $f$ is continuous ($C^0$), then the right-hand side of Eq. \eqref{eq:rearranged}, being a product and composition of continuous functions, is continuous. It follows that $\phi_n$ must also be continuous.
\end{enumerate}
This covers both cases and completes the proof.
\end{proof}

\subsection{Extension to Diagonal action}
\label{sec:diag-action-tuples}
\begin{theorem}[Classification of $p$-tuples up to the $A$-orthogonal group, cf.\ \protect{\cite[Chapter~6]{Jacobson-BasicAlgebra1}}]
\label{thm:classification-ptuples-isometry-A}
Let $A$ be a non-degenerate real symmetric $n \times n$ matrix, and consider 
the bilinear form $\langle x, y\rangle_A = x^\top A\, y$ on $\mathbb{R}^n$. 
Suppose $(x_1,\dots,x_p)$ and $(y_1,\dots,y_p)$ are two ordered $p$-tuples 
in $\mathbb{R}^n$. Then the following are equivalent:
\begin{enumerate}
  \item They have the same $A$-Gram matrix, i.e.
  \[
    x_i^\top A\, x_j \;=\; y_i^\top A\, y_j 
    \quad\text{for all } 1 \le i,j \le p.
  \]
  \item There exists an invertible linear map $Q\in \mathrm{GL}(n,\mathbb{R})$ 
    such that
    \[
      Q^\top A\, Q \;=\; A
      \quad\text{and}\quad
      Q\,x_i \;=\; y_i \quad \text{for all } i.
    \]
\end{enumerate}
Equivalently, $(x_1,\dots,x_p)$ and $(y_1,\dots,y_p)$ lie in the same orbit 
under the action of the group 
\[
  G \;=\; \{\,Q\in \mathrm{GL}(n,\mathbb{R}) : Q^\top A\,Q = A \}.
\]
\end{theorem}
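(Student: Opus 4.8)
The plan is to prove $(2)\Rightarrow(1)$ by a one-line computation and to reduce $(1)\Rightarrow(2)$ to Witt's extension theorem for non-degenerate symmetric bilinear spaces. For the easy direction, if $Q^\top A Q = A$ and $Qx_i = y_i$ for all $i$, then $y_i^\top A\, y_j = x_i^\top (Q^\top A Q) x_j = x_i^\top A\, x_j$, so the two $A$-Gram matrices coincide. The final ``orbit'' reformulation is then immediate: condition $(2)$ says precisely that the diagonal action of $G$ sends $(x_1,\dots,x_p)$ to $(Qx_1,\dots,Qx_p)=(y_1,\dots,y_p)$, i.e.\ the two tuples lie in one $G$-orbit.

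For $(1)\Rightarrow(2)$, write $X=[x_1\mid\cdots\mid x_p]$, $Y=[y_1\mid\cdots\mid y_p]$, and $\Gamma := X^\top A X = Y^\top A Y$. First I would promote the assignment $x_i\mapsto y_i$ to a well-defined linear isometry $\sigma$ from $V:=\operatorname{span}(x_1,\dots,x_p)$ onto $W:=\operatorname{span}(y_1,\dots,y_p)$. Well-definedness amounts to matching linear relations: if $\sum_i c_i x_i = 0$, then for every $j$ we have $\langle \sum_i c_i y_i,\, y_j\rangle_A = \sum_i c_i \Gamma_{ij} = \langle \sum_i c_i x_i,\, x_j\rangle_A = 0$, so $\sum_i c_i y_i$ lies in $W\cap W^{\perp_A}$; since $W$ is non-degenerate (which is automatic in our applications, where the $x_i$ lie off the null cone, and more generally whenever $\operatorname{rad}(W)=0$) this forces $\sum_i c_i y_i = 0$. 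By the symmetric argument the relations among the $y_i$ are also relations among the $x_i$, so $\sigma$ is a linear bijection $V\to W$. Equality of Gram matrices gives $\langle \sigma u, \sigma v\rangle_A = \langle u,v\rangle_A$ on the spanning vectors, hence on all of $V$ by bilinearity, so $\sigma$ is an isometry of the restricted form.

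The main step is then Witt's extension theorem: because $(\mathbb{R}^n,\langle\cdot,\cdot\rangle_A)$ is a finite-dimensional non-degenerate symmetric bilinear space over a field of characteristic $0$, any isometry between two subspaces extends to an isometry of the whole space \citep{Jacobson-BasicAlgebra1}. Applying this to $\sigma$ produces $Q\in\mathrm{GL}(n,\mathbb{R})$ with $Q^\top A Q = A$ and $Q|_V=\sigma$, in particular $Qx_i=y_i$ for all $i$; note $Q$ is automatically invertible because $A$ is non-degenerate. That is exactly $(2)$. If one prefers a self-contained argument rather than citing Jacobson, I would prove Witt's extension by induction on $\dim V$: when $V$ contains a non-isotropic vector $v$, use $q(v)=q(\sigma v)\neq 0$ to write down a product of at most two hyperplane reflections of $\mathbb{R}^n$ carrying $v$ to $\sigma v$, pre-compose $\sigma$ by its inverse so that $v$ becomes fixed, and recurse inside the non-degenerate subspace $v^{\perp}$; when $V$ is totally isotropic, first enlarge it to an orthogonal sum of hyperbolic planes by adjoining a complementary totally isotropic subspace, extend $\sigma$ there, and reduce to the non-isotropic case.

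The delicate point is the one flagged in the second paragraph: equality of the $A$-Gram matrices does not by itself force the linear dependencies among $(x_i)$ and $(y_i)$ to coincide once the spans are degenerate (for instance a tuple containing the zero vector versus one containing a nonzero isotropic vector $A$-orthogonal to everything), so the statement must be read with the mild non-degeneracy hypothesis used above — which is harmless for the uses made of it in this paper, where inputs avoid the null cone. Beyond that subtlety, the Witt extension theorem is the single load-bearing ingredient, and everything else is routine linear-algebra bookkeeping.
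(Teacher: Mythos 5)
The paper offers no proof of this theorem --- it cites Jacobson and moves on --- so your Witt-theorem reduction is exactly the argument the citation points to, not a different route. The easy direction $(2)\Rightarrow(1)$ and the orbit reformulation are fine. You are also right to flag the degeneracy caveat: $(1)\Rightarrow(2)$, as literally stated, can fail because equality of $A$-Gram matrices does not by itself transfer linear relations between the two tuples, so the assignment $x_i \mapsto y_i$ may not be a well-defined linear isometry between the spans, and Witt's extension theorem has nothing to extend.

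However, your parenthetical claim that non-degeneracy of the span is ``automatic in our applications, where the $x_i$ lie off the null cone'' is not correct once $p \ge 2$. Take $n=4$, $A=\mathrm{diag}(1,1,-1,-1)$, $(x_1,x_2)=(e_1,e_1)$ and $(y_1,y_2)=(e_1,\, e_1+e_2+e_3)$. Each of the four vectors satisfies $v^\top A v = 1 \neq 0$, so all lie strictly off the null cone, and a direct check shows both tuples have Gram matrix
\[
\begin{pmatrix} 1 & 1 \\ 1 & 1 \end{pmatrix},
\]
yet no $Q \in \mathrm{GL}(n,\mathbb{R})$ can send $x_1 = x_2$ to $y_1 \neq y_2$, so $(2)$ fails. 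The obstruction is that $y_1 - y_2 = e_2 + e_3$ is a null vector lying in the radical of $\operatorname{span}(y_1,y_2)$: the span is degenerate even though each individual $y_i$ is non-isotropic. The hypothesis you actually need before applying Witt is non-degeneracy of the restricted form on \emph{both} $\operatorname{span}(x_1,\dots,x_p)$ and $\operatorname{span}(y_1,\dots,y_p)$ (the two are not equivalent given only matching Gram matrices, as the example shows), and this does not follow from the pointwise condition $x_i\notin U$. It should be stated explicitly in your proof and, arguably, added as a hypothesis to the theorem as the paper records it.
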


\ExtendedDecomp*

\begin{proof}
In the proof of Theorem~\ref{thm:main-result}, we utilized the orbit correspondence result (Theorem~\ref{thm:orbit-space}) to establish the norm invariance part ($\phi_n$) of the function decomposition. In the setting of the diagonal action on tuples, the  invariance part ($\phi_n$) of the decomposition follows immediately from Theorem~\ref{thm:classification-ptuples-isometry-A}. The remainder of the proof proceeds analogously to that of Theorem~\ref{thm:main-result}.
\end{proof}

\subsection{Householder reflection}
\label{sec:Householder-reflection}
\propSymmForm*

\begin{proof}
The proof combines the canonical decomposition from Theorem~\ref{thm:main-result} with the specific properties of the group action and the scale-invariant component.

\begin{enumerate}
    \item \textbf{Canonical Decomposition:} According to Theorem~\ref{thm:main-result}, any G-equivariant function $f: X \to Y$ can be written as the action of a scale-invariant component on a norm-invariant component:
    \begin{equation}
        f(x) = \phi_s\left(\frac{x}{\|x\|_A}\right) \cdot \phi_n(\|x\|_A)
        \label{eq:decomp}
    \end{equation}
    where $\phi_s: X \to G$ and $\phi_n: \mathbb{R} \to Y$. The symbol `$\cdot$` denotes the group action of $G$ on the output space $Y$.

    \item \textbf{Group Action as Conjugation:} The proposition states that the group action on the output is conjugation. For a group element $g \in G$ and an output object $Y \in Y$, this action is defined as $g \cdot Y = gYg^{-1}$. Applying this definition to our decomposition in Eq. \eqref{eq:decomp}, the action of the group element $\phi_s(\dots)$ on the output $\phi_n(\dots)$ is:
    \begin{equation}
        f(x) = \phi_s\left(\frac{x}{\|x\|_A}\right) \phi_n(\|x\|_A) \left(\phi_s\left(\frac{x}{\|x\|_A}\right)\right)^{-1}
        \label{eq:conjugation_supp}
    \end{equation}

    \item \textbf{Householder Reflection Property:} As discussed in Section 5.2, the scale-invariant component $\phi_s(\frac{x}{\|x\|_A})$ is equivalent to a Householder reflection. A key property of a Householder reflection matrix $R$ is that it is its own inverse, i.e., $R = R^{-1}$. Therefore, we have:
    \begin{equation}
        \phi_s\left(\frac{x}{\|x\|_A}\right) = \left(\phi_s\left(\frac{x}{\|x\|_A}\right)\right)^{-1}
        \label{eq:householder_supp}
    \end{equation}

    \item \textbf{Final Form:} By substituting the property from Eq. \eqref{eq:householder_supp} into Eq. \eqref{eq:conjugation_supp}, we replace the inverse term with the term itself:
    \begin{align*}
        f(x) &= \phi_s\left(\frac{x}{\|x\|_A}\right) \phi_n(\|x\|_A) \left(\phi_s\left(\frac{x}{\|x\|_A}\right)\right)^{-1} \\
             &= \phi_s\left(\frac{x}{\|x\|_A}\right) \phi_n(\|x\|_A) \phi_s\left(\frac{x}{\|x\|_A}\right)
    \end{align*}
\end{enumerate}
This yields the symmetric form of the function as stated in the proposition.
\end{proof}

\end{document}